\def\R{\mathbb{R}}                            
\def\P{\mathbb{P}} 
\def\cD{\mathcal{D}}
\def\e{\varepsilon}
\def\d{\mathrm{d}}
\def\EE{\mathbb{E}}   
\newcommand{\ip}[1]{\langle {#1} \rangle }
\def\Tr{\mathrm{Tr}}
\newcommand{\E}{\mathbb{E}}
\newcommand{\innerp}[1]{\langle{#1}\rangle}
\def\calC{\mathcal{C}}
\def\calD{\mathcal{D}}
\def\calE{\mathcal{E}}
\def\calH{\mathcal{H}}
\def\calG{\mathcal{G}}
\def\calA{\mathcal{A}}
\def\calN{\mathcal{N}}
\def\op{\text{op}}
\DeclareMathOperator*{\argmin}{arg\,min}
\newcommand{\KL}{D_{\mathrm{KL}}}
\definecolor{dgreen}{RGB}{0,153,76}
\newif\ifdebug
\newcommand{\SZ}[1]{\textcolor{olive}{{#1}}}
\newcommand{\XW}[1]{{\color{red}{#1}}}
\newcommand{\IS}[1]{\textcolor{magenta}{{#1}}}
\newcommand{\FL}[1]{\textcolor{cyan}{{#1}}}
  \newcommand{\SZ}[1]{}
  \newcommand{\XW}[1]{}
  \newcommand{\IS}[1]{}
  \newcommand{\FL}[1]{}
\newtheorem{theorem}{Theorem}
\newtheorem{corollary}[theorem]{Corollary}
\newtheorem{definition}[theorem]{Definition}
\newtheorem{lemma}[theorem]{Lemma}
\newtheorem{remark}[theorem]{Remark}
\newtheorem{assumption}{Assumption}[section]
\newenvironment{proof}[1][Proof. ]{%
  \par\noindent\textbf{#1 }\ignorespaces
}{\hfill$\square$\par} 
 \numberwithin{equation}{section}
 \numberwithin{theorem}{section}
\title{ Minimax Rates for Learning Pairwise Interactions in Attention-Style Models }
\author[1]{Shai Zucker\thanks{These authors contributed equally to this work.}}
\author[2]{Xiong Wang$^{*}$}
\author[3]{Fei Lu}
\author[1,4]{Inbar Seroussi}
\affil[1]{Department of Applied Mathematics, Tel Aviv University}
\affil[2]{School of Mathematics, Sun Yat-sen University, Guangzhou, China}
\affil[3]{Department of Mathematics, Johns Hopkins University, Baltimore, USA}
\affil[4]{School of Computer Science, Tel Aviv University}
\date{}
\begin{document}

\maketitle

\begin{abstract}

We study the convergence rate of learning pairwise interactions in single-layer attention-style models, where tokens interact through a weight matrix and a nonlinear activation function. We prove that the minimax rate is $M^{-\frac{2\beta}{2\beta+1}}$, where $M$ is the sample size and $\beta$ is the H\"older smoothness of the activation function. Importantly, this rate is independent of the embedding dimension $d$, the number of tokens $N$, and the rank $r$ of the weight matrix, provided that $rd \le (M/\log M)^{\frac{1}{2\beta+1}}$. These results highlight a fundamental statistical efficiency of attention-style models, even when the weight matrix and activation are not separately identifiable, and provide a theoretical understanding of attention mechanisms and guidance on training.

\end{abstract}
\section{Introduction}
The transformer architecture \citep{attention_is_all_you_need} has achieved remarkable success in natural language processing, computer vision, and other AI domains, with its impact most visible in large language models (LLMs) such as GPT \citep{openai2024gpt4technicalreport}, LLaMA \citep{touvron2023l_LLAMA}, and BERT \citep{devlin2019bertpretrainingdeepbidirectional}. At its core, attention mechanisms model nonlocal dependencies between input tokens through pairwise interactions, creating a function class capable of representing intricate contextual relationships.

Despite the empirical success, our theoretical understanding remains incomplete. The attention mechanism computes weighted averages of token representations using pairwise similarities, but we observe only the aggregated outputs and not the underlying interaction structure that generates them. This creates a fundamental inverse problem with critical \emph{sample complexity} questions: can we recover the interaction function from these aggregated observations, how many samples are needed to learn token-to-token interactions for a given accuracy level, and how does the convergence rate depend on embedding dimension, number of tokens, and smoothness of the activation function? Recent phenomena like extreme attention weights on certain tokens \citep{sun2024massive,guo2024attention,xiao2023efficient,9662308} further highlight gaps in our understanding of how transformers process token interactions.

In this paper, we tackle these questions by analyzing an Interacting Particle System (IPS) model for attention-style mechanisms. Tokens are viewed as “particles,” and the self-attention aggregates pairwise interactions between them. The interaction is a composite of an unknown embedding matrix and an unknown nonlinear activation function, both of which are learned from data. This makes the problem challenging as it is fundamentally \emph{nonconvex}. Our IPS approach provides a natural framework for understanding how transformers process inputs with a large number of correlated tokens, moving beyond the restrictive assumption of independent, isotropic token distributions.

We summarize our main contributions below:  
\begin{enumerate}[leftmargin=8mm] 
    \item[$\bullet$]
    We establish a connection between transformers and IPS models, enabling us to address the challenging inverse problem of inferring nonlinear interactions learned by attention mechanisms. Our analysis extends beyond the standard assumption of independent, isotropic token distributions to allow for dependent and anisotropic data.
    \item[$\bullet$] 
      Inferring the interaction function is an inverse problem. We prove that under a \emph{coercivity condition} (Lemma \ref{lem:coercivity}), this problem is well-posed in the large sample limit. This condition holds for a large class of input distributions. 
    \item[$\bullet$] We prove that the rate of $M^{-\frac{2\beta}{2\beta+1}}$ is the optimal (up to logarithmic factors) minimax convergence rate in estimating the $2d$-dimensional pairwise interaction function, where $M$ is the sample size and $\beta$ is the H\"older exponent of the function. The error is composed of parametric and nonparametric terms. Importantly when $rd \le (M/\log M)^{\frac{1}{2\beta+1}}$, the leading term is the nonparametric term $M^{-\frac{2\beta}{2\beta+1}}$, which does not depend on the embedding dimension $d$ or the rank $r$. This confirms that the attention-style model evades the curse of dimensionality.

\end{enumerate}

\subsection{Related works} \label{seq:related_works}
\textbf{Neural networks and IPS.} Modeling neural networks as dynamical systems through depth was introduced in \citet{neural_ode}, which framed updates in ResNet architectures as the dynamics of a state vector. This perspective has been generalized to various architectures, typically treating skip connections as the evolving state across layers. Following this approach, in 

    \citet{geshkovski2024_EmergenceClusters,geshkovski2023mathematical}

they view tokens as interacting particles, analyze the attention as an IPS, and study clustering phenomena in continuous time (in depth). 

Similarly, \citet{dutta2021redesigningtransformerarchitectureinsights} leverages a similar framework to compute attention outputs directly from an initial state evolved over depth, thereby reducing computational costs. While these works provide valuable insights, they focus exclusively on the dynamics of tokens through the layers. To our knowledge, no existing work addresses the learning theory for estimating the pairwise interactions in such particle systems. 

\textbf{Inference in attention models.} Many theoretical works have studied the learnability of attention, focusing on specific regimes. Some consider simplified variants, such as linear or random feature target attention models \citep{wang2020linformerselfattentionlinearcomplexity,yue_lu_in_context,marion2024attention,hron2020infinite, fu2023can}, which explore the capability of this model under simple regression tasks. 

\citet{deora2024optimizationgeneralizationmultiheadattention} analyze logistic-loss optimization and prove a generalization rate under a ``good'' initialization.

Others consider a more
specific architecture, 
\citet{li2023theoreticalunderstandingshallowvision} 
study the training of shallow vision transformers (ViT) and show that, with suitable initialization and enough stochastic gradient steps, a transformer with additional $\mathrm{ReLU}$ layer can achieve zero error. 

Several works study softmax attention layers with trainable key and query matrices in the limit of high embedding dimension quadratically proportionate to samples with i.i.d.\ tokens \cite{troiani2025fundamental,cui2024phase,cui2025high, boncoraglio2025bayes}, which is further expanded in \citet{troiani2025fundamental} for $\mathrm{softmax}$ attention (without the value matrix) with multiple layers. 

These works have mainly focused on the linear/softmax attention model and do not consider a general interaction function. In addition, most studies assume the tokens are independent and  do not draw a connection to the IPS system.

\textbf{Inference for systems of interacting particles.} 
There is a large body of work on the inference of systems of interacting particles; we state a few here. Parametric inference has been studied in  \citet{amorino2023parameter,chen2021_MaximumLikelihooda,della2023lan,kasonga1990_MaximumLikelihooda,liu2020_ParameterEstimation,sharrock2021_ParameterEstimation} for the operator (drift term) and in \citet{huang2019learning} for the noise variance (diffusion term). Nonparametric inference on estimating the entire operator $R_g$, but not the kernel $g$, has been studied in \citet{della2022nonparametric,yao2022mean}. 

The closest to this study are \citet{LMT21_JMLR,LMT22, LZTM19pnas, wang2025optimalminimaxratelearning}. A key difference from these studies is that their goal is to estimate the radial interaction kernel, whereas our $2d$-dimensional pairwise interaction function is not shift-invariant due to the weight matrix. In addition, all these studies focus on IPS in general, without a clear connection to attention models.

\textbf{Activation function in transformer layer.} 
Recent work has shown that attention models suffer from the ``extreme-token phenomenon'', where certain tokens receive disproportionately high weights, creating challenges for downstream tasks \citep{sun2024massive,guo2024attention,xiao2023efficient,9662308}. To address this, it was proposed to replace softmax with alternatives, such as ReLU \citep{guo2024activedormantattentionheadsmechanistically,zhang2021sparseattentionlinearunits}, which can "turn off" irrelevant tokens, a capability that softmax lacks. While linear attention can outperform softmax in regression tasks by avoiding additional error offsets \citep{vonoswald2023transformerslearnincontextgradient,katharopoulos2020transformers,yu2024nonlocal,han2024bridging}, it may be inferior for classification \citep{oymak2023roleattentionprompttuning}. 
These findings suggest no universally optimal activation function exists, making the theoretical analysis of a general interaction function in transformer-type models crucial. 
 As for vision tasks, several Vision Transformer (ViT) variants remove the softmax activation while remaining competitive. For example, \cite{SOFT} consider an attention mechanism based on a Gaussian kernel, and \cite{SIMA} apply linear attention after normalizing the Key-Query columns. Furthermore, \cite{sigmoid_attetion} examine a sigmoid function as the attention activation, showing it acts as a universal function approximator and benefits from improved regularity compared to softmax attention.

\paragraph{Nonparametric and Semiparametric Estimation for Neural Networks}
Classical nonparametric estimation provides optimal minimax rates for simple structures. \citet{gaiffas2007optimal} provide bounds for the single index model $f(w^\top x)$ of order $M^{\frac{-2\beta}{2\beta+1}}$.
For the more general projection pursuit model $f(x) = \sum_{j=1}^K f_j(\ip{x,\beta_j})$, \citet{gyorfi2006distribution} shows that the minimax rate is the standard rate up to a log factor. These results directly apply to small single-layer neural networks.

Closer to deep learning, \citet{horowitz_rate_optimal} analyze generalized additive models with nested $k$-times differentiable compositions, showing the rate is $M^{-\frac{2k}{2k+1}}$. \citet{Schmidt_Hieber_nonparametric} proves that connected deep ReLU networks achieve a near-optimal minimax rate (up to log factors) over a class of composed functions. In \citet{nonparametric_interactions} they study  a nonparametric interaction model in high dimension settings and show sparsity assumptions and associated regularization are required in order to obtain optimal rates of convergence.

\paragraph{Notation.} Throughout the paper, we use $C$ to denote universal constants independent of the sample size $M$, particles $N$ and the embedding dimension $d,r$. The notations $C_\beta$ or $C_{\beta,L}$ denote constants depending on the subscripts. We introduce the $L^2_{\rho}$ inner product as $\innerp{f,g}_{L^2_{\rho}}=\int f(r)g(r)\rho (dr)$ and denote the $L^p_\rho$ norm by $\|f\|_{L_\rho^p}^p = \int |f(r)|^p\rho (dr)$ for all $p\geq 1$. For vectors $a,b\in\R^d$ and $A\in\R^{d\times d}$ we write $\langle a,b\rangle_A:=a^\top Ab$.

\section{Problem formulation}
In this section, we describe our statistical task and connect it to the attention model. 
\paragraph{Model setup and learning task.}
We consider a model of $N$ interacting particles, 
\begin{equation}\label{eq:scalar_model}
Y_i = \frac{1}{N-1}
\sum_{j=1, j\neq i}^N \phi_\star\!\big(\langle X_i,X_j\rangle_{A
_\star}\big) + \eta_i
\end{equation}
where $\eta\in \R^N$ is noise as specified in Assumption~\ref{assumption:noise}, $\phi_\star:\mathbb{R}\to\mathbb{R}$ is an unknown interaction kernel, and $A_\star\in\mathbb{R}^{d\times d}$ is an unknown interaction matrix. Here, we write $\langle x,y\rangle_{A}:=x^\top Ay$ for $x,y\in \R^d$ and $A\in \R^{d\times d}$. 
The input $X=(X_1,\ldots, X_N)^\top \in \calC_d^N:=([0,1]^d/\sqrt{d})^{N} \subset \R^{N\times d} $ denotes the particle positions (or token values), and the output $Y=(Y_1,\ldots,Y_N)\in \R^{N\times 1}$ represents the average interactions between the particles. 

We observe $M$ i.i.d.\ samples
\[
\cD_M=\{(X^m,Y^m)\}_{m=1}^M,\qquad X^m\in \calC_d^N:=([0,1]^d/\sqrt{d})^{N}, Y^m\in\R^N,
\]
allowing the $N$ particles and their entries to be dependent. 

The task is to learn the pairwise interaction function $g_\star: \R^d\times \R^d\to \R$, 

\begin{equation} \label{eq:g_star}
g_\star(x,y) := \phi_\star \big(\langle x,y\rangle_{A_\star}\big),\qquad (x,y)\in\R^d\times\R^d,
\end{equation}

from the dataset of observations $\cD_M$. 

We introduce the vectorized view of the model via the forward operator $R_g$ for any candidate interaction function $g:\R^d\times \R^d\to \R$ as $R_g[X]_i := \frac{1}{N-1}\sum_{j=1, j\neq i}^N g(X_i,X_j).$
Accordingly, our model in \eqref{eq:scalar_model} becomes $Y_i = R_{g_\star}[X]_i + \eta_i$.

\paragraph{Connection to self-attention layer.}
We view self-attention through the lens of an IPS: tokens are “particles,” and attention aggregates pairwise interactions between them. 
A typical self-attention layer is composed of an attention block with learnable query, key, and value matrices, $W_Q,W_K\in\R^{d\times d_k}$ with $d_k\le d$ and $W_V\in\R^{d\times d_v}$ that compute
\begin{equation}\label{eq:attn-core}
\mathrm{Att}(Q,K,V)=\mathrm{softmax} \Big(\tfrac{QK^\top}{\sqrt{d_k}}\Big)V,\qquad
Q=XW_Q,\; K=XW_K,\; V=XW_V.
\end{equation}

The attention operation is then often followed by an application of a multilayer perceptron (MLP), which maps the above into some other nonlinear function.   
The pairwise structure of attention motivates modeling token interactions via a scalar kernel function applied to a bilinear form of some score interaction matrix $A_\star$ that can be viewed as the learned projections through $\frac{1}{\sqrt{d_k}} W_Q W_K^\top$, i.e., 
\[ 
\mathrm{softmax} \Big(\tfrac{QK^\top}{\sqrt{d_k}}\Big) = \mathrm{softmax}\Big(X A_\star X^\top \Big), \quad X A_\star X^\top = \big(X_i^\top \tfrac{  W_QW_K^\top}{\sqrt{d_k}}  X_j \big)_{1\leq i,j\leq N}. 
\]
The interaction function in 
    \eqref{eq:g_star}
 can be interpreted as either a function induced by the MLP and the softmax function, or as a general activation function with a constant value matrix; 
    see more details in Appendix \ref{app:softmax_connect}.

As stated in the related work, such a setup for a general activation function is often desirable due to the extreme-token phenomenon \citep{sun2024massive,guo2024attention,xiao2023efficient,9662308}

Consequently, the problem of estimating $g_\star$ 
    
from the samples described in
 \eqref{eq:scalar_model} is analogous to the joint estimation of the activation function and weight matrix governing nonlocal token–token interactions in a single-layer self-attention mechanism.

\paragraph{Goal of this study.} Our goal is to characterize the optimal (minimax) convergence rate of estimators of $g_\star$ as the sample size $M$ grows.

To assess the estimation error for the interaction function, we introduce empirical measures over pairs of particles/tokens $(x,y)$. Termed \emph{exploration measures}, they quantify the extent to which the data explores the argument space relevant to the function.
\begin{definition}[Exploration measure]\label{def:rho}
	Let $\{X^m \in \calC_d\}^{M}_{m=1}$ be  sampled sequence. Define the \emph{empirical exploration  measure} of off-diagonal pairs of particles 
	\[
	 \rho_M(B) := \frac{1}{M N(N-1)} \sum_{m=1}^M \sum_{i=1}^N\sum_{j = 1, j\neq i}^N  \mathbf{1}_{ \{(X^m_i, X^m_j)\in B\}}
	\]
	and the \emph{population exploration measure} as $\rho (B):= \lim_{M\to \infty} \rho_M(B)  = \EE[\rho_M(B)],$

for any Lebesgue measurable set  $B\subset \R^d\times\R^d .$
	\end{definition}

We aim to provide matching upper and lower bound rates for the $L^2_\rho$ error of the estimator $\widehat{g}$, so as to obtain a minimax convergence rate: 
	\begin{align}
	\EE\Big[\|\widehat g-g_\star\|_{L^2_\rho}^2\Big] \approx M^{-\frac{2\beta}{2\beta+1}},\quad \text{ as }M\to \infty, 
	\end{align}
    where $\beta$ is the H\"older exponent of $g_\star$ (which is determined by the smoothness of $\phi_\star$). This then demonstrates that the attention model is not susceptible to the curse of dimensionality. In particular, we aim to characterize the dependence of the rate on the embedding dimension $d$, the rank $r$ of the interaction matrix, and the number of tokens $N$.

\subsection{Assumptions on the data distribution} \label{sec:assumptions}
We now state the assumptions on the distributions of the input and the noise used throughout this work. We do not assume that the $N$ tokens are independent of each other.

\begin{assumption}[Data Distribution] \label{assumption:data_dist}
We assume the entries of the $\calC_d^N = ([0,1]^d/\sqrt{d})^{N}$-valued random variable $X=(X_1,\ldots,X_N)$ satisfy the following conditions:
\begin{enumerate}[label=$(\mathrm{A\arabic*})$]
\item\label{Assump:exchangeable} The components of the random vector $X = (X_1, \ldots, X_N)$ are exchangeable.
\item \label{Assump:conditional_independence} For each $i\in\{1,\ldots,N\}$ and any $j\neq j'$ with $j,j'\neq i$,
there exists a $\sigma$-algebra $\mathcal{X}_i\supseteq \sigma(X_i)$ such that
$X_j$ and $X_{j'}$ are conditionally independent given $\mathcal{X}_i$.

\item\label{Assump:Continous} The joint distribution of $(X_i,X_j)$ has a continuous density function for each pair.   
 \end{enumerate}
\end{assumption}
These assumptions simplify the inverse problem and may be replaced by weaker constraints; see \cite{wang2025optimalminimaxratelearning} for a discussion and references therein.
The exchangeability in \ref{Assump:exchangeable} simplifies the exploration measure in Lemma \ref{lemma:rho_exchan}. It enables the coercivity condition for the inverse problem to be well-posed, as detailed in Lemma \ref{lem:coercivity}, and is only used in the upper bound in Theorem \ref{theorem:upper_bound}. 

The continuity in Assumption \ref{Assump:Continous} ensures that the exploration measure has a continuous density, which is used in proving the lower minimax rate Theorem \ref{thm:main_lower}.

We next specify the noise setting. Assumption \ref{assumption:noise} details the constraints we assume for the noise: 
\begin{assumption}[Noise Distribution] \label{assumption:noise}
	The noise $\eta\in \R^N$ is centered and independent of the random array \(X\). Moreover, we assume the following conditions:
	\begin{enumerate}[label=$(\mathrm{B\arabic*})$]
		\item \label{assumption:noise_subG} The entries of the noise vector \(\eta = (\eta_1, \ldots, \eta_N)\) are 
        sub-Gaussian in the sense that for all $i$, $\E[e^{c \eta_i^2}]<\infty$ for some $c>0$. 
		\item \label{assumption:noise_int} There exists a constant \(c_\eta > 0\) such that The density \(p_\eta\) of \(\eta\) satisfies the following:
		      \begin{equation}
			      \int_{\mathbb{R}^{N}} p_\eta(u) \log \frac{p_\eta(u)}{p_\eta(u + v)}  du \leq c_\eta   \|v\|^2, \quad \forall v \in \mathbb{R}^{N}.
		      \end{equation}
	\end{enumerate}
\end{assumption}
We note that assumptions \ref{assumption:noise_subG} and \ref{assumption:noise_int} hold for instance for Gaussian noise $\eta \sim \calN(0,\sigma_\eta^2 I_N)$ with  $c_\eta = 1/(2\sigma_\eta^2)$.

\subsection{Function classes and model/estimator assumptions}

We introduce the functional classes where $g_\star$ lies. Our goal is to consider as large a class of functions as possible while also tracking the properties of the models $\phi_\star$ that control the rate. For that purpose, we introduce the H\"older class and assume that $\phi_\star$ satisfies some smoothness order of $\beta$.

\begin{definition}[H\"older classes]\label{def:H\"older}
For $\beta,L, \bar{a}>0$, the H\"older class $\calC^{\beta}(L,\bar{a})$ on $[-\bar{a},\bar{a}]$ is given by 
 \begin{equation}\label{eq:class-H\"older}
 \begin{aligned}
 \calC^\beta( L,\bar{a})&=\Big\{f:[-\bar{a},\bar{a}] \to \R : | f^{(l)}(x)-f^{(l)}(y)|\leq L| x -y |^{\beta-l}, \forall x,y \in [-\bar{a},\bar{a}] \Big\},
 \end{aligned}
\end{equation}
where $f^{(j)}$ denotes the $j$-th order derivative of functions $f$ and $l =\lfloor\beta\rfloor$. 
\end{definition}    

Low-rank Key and Query matrices often play an important role in the attention model. To keep track of the effects of the rank on the minimax rate, we introduce the following matrix class for the interaction matrix $A_\star$, which is the product of the Key and Query matrices. 
\begin{definition}[Interaction matrix class]\label{def:lowRM}
    For $\bar{a}>0$, the $d$-dimensional matrix class $\calA_d(r,\bar{a})$ with rank $r\in \mathbb{N}$ and $2\leq r\leq d$ is given by 
    \begin{equation}\label{eq:class-lowRM}
        \calA_{d}(r,\bar{a})=\{A\in \R^{d\times d}: {\rm 2 \le rank}(A)\leq r\,, \ \|A\|_{\textup{op}}\leq \bar{a}\}\,.        
    \end{equation}
\end{definition}

Combining both classes, we consider the following function class $\calG_r^\beta$ for all the possible pair-wise interaction functions. 
\begin{definition}[Target function class]\label{def:G_r}
Given $L,B_\phi,\bar a>0$ and rank $r\geq 2$, $\beta >0$ define
\begin{align}\label{eq:setG}
\calG_r^{\beta}(L,B_\phi,\bar a)
=\Big\{g_{\phi,A}(x,y):=\phi(x^\top A y)\ :\ 
&\phi\in\mathcal C^\beta(L,\bar{a}),\ \|\phi\|_\infty \le B_\phi,  A\in \calA_{d}(r,\bar{a})\Big\}.
\end{align}
Moreover, for any $g\in\calG_r^{\beta}= \mathcal{G}_r^{\beta}(L, B_\phi, \bar{a})$, it follows that $|R_g[X]_i|\le B_\phi.$ For technical reasons, we require $L \leq  B_\phi (2\bar{a})^{\beta}.$ This holds without loss of generality for any $\bar a\ge 1$ and $L\le B_\phi$.
\end{definition}

We provide both lower and upper bounds for the possible error rate by the number of samples for the interaction $g(\cdot,\cdot)\in\calG_r^{\beta}$. We  consider the following functional class for our estimator:

\begin{definition}[Estimator function class]\label{def:G_r_KM}
Let $s:=\max(\lfloor\beta\rfloor, 1)$ and $K_M\in\mathbb N$.  Let $\Phi_{K_M}^{s}$ denote the class of piecewise polynomials of degree $s$, defined on $K_M$ equal sub-intervals of $[-\bar a, \bar a]$.

The corresponding \emph{estimator} model class is
\begin{equation}\label{eq:G_r_K_M}
\calG_{r,K_M}^s
:=\Big\{g_{\phi,A}:\ \phi\in\Phi_{K_M}^{s},\ \|\phi\|_\infty + \|\phi'\|_\infty\le B_\phi, \ A\in \calA_{d}(r,\bar{a}) \Big\}
\ \subseteq \ \calG_r^{\beta}.
\end{equation}
\end{definition}

\section{Upper bound\label{sec:upper_bound}}
In this section, we provide an upper bound on estimating the token-token interaction. We propose the following estimator $\widehat{g}_{M}(x,y) = \hat{\phi}(\ip{x,y}_{\hat{A}})$ 

as the empirical risk minimizer over the functional class \ref{def:G_r_KM}    

\begin{equation}\label{eq:g_est1}
	\begin{cases}
		\widehat{g}_{M} = \argmin\limits_{g_{\phi,A}\in \calG_{r,K_M}^s} \calE_M(g_{\phi,A}): = \frac{1}{N} \sum\limits_{i=1}^{N} \calE_M^{(i)}(g_{\phi,A}) \quad \text{with}\\
		 \calE_M^{(i)}(g_{\phi,A}):= \frac{1}{M} \sum\limits_{m=1}^{M} \|Y_i^{m}-
         R_{g_{\phi,A}}[X^m]_i
         \|^2. 
	\end{cases}
\end{equation}
Here, $R_{g_{\phi,A}}[X]_i = \frac{1}{N-1}\sum_{j=1,j\neq i}^N g_{\phi,A}(X_i,X_j)$ is the forward operator with interaction function $g_{\phi,A}$. Our goal is to prove that the estimator $\widehat{g}_{M}$ achieves the optimal upper bound.
The large sample limit of  $\calE_M(g_{\phi,A})$ is then
 \[
\calE_\infty(g_{\phi,A}) := \lim_{M\to \infty}\calE_M(g_{\phi,A}) =  \frac{1}{N}\EE \left[\|Y-R_{g_{\phi,A}}[X]
\|_{2}^2 \right].  
\]  
The $i$-th error $\calE_{\infty}^{(i)}(g_{\phi,A})$ for any $1\leq i\leq N$ is defined in the same manner.

The next theorem states that this estimator achieves the nearly optimal rate in estimating the interaction function. This rate matches the lower bound in Theorem \ref{thm:main_lower} up to a logarithmic factor. Its proof is deferred to Appendix \ref{subsec:proof_upper_bound}. 	
\begin{theorem} \label{theorem:upper_bound} 
Suppose $rd  \le (M/\log M)^{\frac{1}{2\beta +1}}$. Consider the estimator  $\widehat{g}_{M}$ defined in \eqref{eq:g_est1} computed on data $M$ i.i.d. observations satisfying Assumptions \ref{assumption:data_dist} and \ref{assumption:noise_subG}.   
    Then, for $\widehat{g}_{M}$ defined in \eqref{eq:g_est1} it holds that   \begin{equation}\label{Ineq:upper_minimax}
	\limsup_{M\to \infty}\sup_{g_\star\in \calG^{\beta}_r(L, B_\phi, \bar a)}\EE \Big[ {M}^{\frac{2\beta}{2\beta+1}} \|\widehat{g}_{M} - g_\star\|^2_{L^2_\rho}\Big]
	\lesssim  C_{N,L,\bar{a},\beta,s},
    \end{equation}
     where $C_{N,L,\bar{a},\beta,s}=N[C_1^\beta \frac{L^2(s \bar{a})^{2\beta}}{(s !)^2}+C_2]$ for some universal positive constants $C_1,C_2$.
\end{theorem}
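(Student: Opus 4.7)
The plan is to follow the classical bias–variance strategy for nonparametric least-squares, adapted to the composite parametrization $g_{\phi,A}(x,y) = \phi(x^\top A y)$. The first step is to reduce the $L^2_\rho$ error of $\widehat g_M$ to its excess risk. Since the noise is centered and independent of $X$, one has $\calE_\infty(g) - \calE_\infty(g_\star) = \frac{1}{N}\E\|R_g[X] - R_{g_\star}[X]\|^2$. Applying the coercivity bound of Lemma \ref{lem:coercivity} (which uses exchangeability \ref{Assump:exchangeable}) then lower-bounds this operator distance by $c\|g - g_\star\|_{L^2_\rho}^2$ for some $c>0$. This is the key step that makes the inverse problem well-posed and allows all subsequent bounds to be stated in the population $L^2_\rho$ norm.

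The second step is an approximation bound. I would pick $\tilde\phi \in \Phi_{K_M}^s$ to be the best piecewise-polynomial approximant of $\phi_\star \in \calC^\beta(L,\bar a)$ on $K_M$ equal sub-intervals. Classical Taylor-based estimates for Hölder functions give $\|\tilde\phi - \phi_\star\|_\infty \lesssim \frac{L(s\bar a)^\beta}{s!} K_M^{-\beta}$, and since $\|g_{\tilde\phi, A_\star} - g_\star\|_{L^2_\rho} \le \|\tilde\phi - \phi_\star\|_\infty$, this yields an approximation rate of $K_M^{-2\beta}$. One has to verify that $g_{\tilde\phi, A_\star}$ lies in $\calG_{r,K_M}^s$ after truncation, using the standing condition $L \le B_\phi(2\bar a)^\beta$ to preserve both the sup-norm and the derivative bound.

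The third step is the estimation error. Since $\widehat g_M$ is the empirical risk minimizer, $\calE_M(\widehat g_M) \le \calE_M(g_{\tilde\phi, A_\star})$, and I would transfer this to the population level by a uniform deviation argument over $\calG_{r,K_M}^s$. Parametrizing $A = UV^\top$ with $U,V\in\R^{d\times r}$ reduces the hypothesis class to an essentially $((s+1)K_M + 2rd)$-dimensional bounded parameter space. The derivative bound $\|\phi'\|_\infty \le B_\phi$ built into Definition \ref{def:G_r_KM}, combined with $\|x\|,\|y\|\le 1$, yields joint Lipschitz dependence of $g_{\phi,A}$ on the parameters, so the log covering number at scale $\e$ is $O((sK_M+rd)\log(1/\e))$. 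A standard sub-Gaussian concentration argument (using Assumption \ref{assumption:noise_subG}) together with a peeling or Talagrand-type bound then produces an estimation error of order $(sK_M + rd)\log M / M$.

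Finally I would balance the two contributions by choosing $K_M \asymp M^{1/(2\beta+1)}$: the approximation term becomes $M^{-2\beta/(2\beta+1)}$ and, thanks to the hypothesis $rd \le (M/\log M)^{1/(2\beta+1)}$, the estimation term is of the same order. Combining via coercivity yields the claimed bound with constant $C_{N,L,\bar a,\beta,s}$ scaling like $N[C_1^\beta L^2(s\bar a)^{2\beta}/(s!)^2 + C_2]$. I expect the main obstacle to be the covering-number step: because $g_{\phi,A}$ depends on $A$ through the composition $\phi(x^\top A y)$, perturbations in $A$ interact multiplicatively with $\phi'$, and the class is genuinely nonconvex; without the explicit derivative bound in Definition \ref{def:G_r_KM} the Lipschitz constant in the $(\phi, A)$ parametrization would be uncontrolled, and the clean parametric complexity $sK_M + rd$ — which is precisely what allows the rate to be dimension-free once $rd$ is moderate — would be lost.
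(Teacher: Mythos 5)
Your proposal follows the same overall strategy as the paper: reduce the $L^2_\rho$ error to the excess risk via the coercivity Lemma~\ref{lem:coercivity}, use a piecewise-polynomial approximation of $\phi_\star$ for the bias term, control the estimation error via covering numbers of the class $\calG_{r,K_M}^s$ (with the low-rank factorization $A = UV^\top$ giving the $rd$-dimensional matrix contribution), and balance by taking $K_M \asymp (M/\log M)^{1/(2\beta+1)}$. Your observation that the derivative bound in Definition~\ref{def:G_r_KM} is what makes the composition $\phi(x^\top A y)$ Lipschitz in $A$, and hence keeps the log-covering number at order $sK_M + rd$, is exactly the point of Lemma~\ref{lem:covering_upbdd}.

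The one gap is that you gloss over the unbounded-noise issue. The covering-number concentration bound from \cite[Theorem~11.4]{gyorfi2006distribution}, which underlies the step you call "a standard sub-Gaussian concentration argument together with a peeling or Talagrand-type bound," applies to \emph{bounded} responses, whereas Assumption~\ref{assumption:noise_subG} only gives sub-Gaussian tails. The paper resolves this by truncating the response at level $B_M = c_1 \log M$ and decomposing the excess risk into three terms: $T_{1,M}$ (approximation, as you have it), $T_{2,M}$ (uniform deviation for the \emph{truncated} response, where the bounded-case machinery applies), and an additional term $T_{3,M}$ controlling the error introduced by truncation, bounded via \cite[Lemma~2]{Kohler2011}. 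Without this truncation step, or an explicit localization/chaining argument built directly for sub-exponential products $\eta_i \cdot (R_g[X]_i - R_{g_\star}[X]_i)$, the uniform deviation step does not close. This is also the source of the extra $(\log M)^{4\max(2\beta,1)}$ factor noted after the theorem, since $B_M$ grows logarithmically; a version of your argument without truncation would need to recover this dependence to match the stated constant.
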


\begin{remark}
  The symbol $\lesssim$ indicates that the upper bound holds up to a logarithmic factor of $(\log M)^{\frac{2\beta}{2\beta+1}+4\max(2\beta, 1)}$.
We believe this factor can be improved, as it currently creates a gap between our upper and lower bounds, representing a limitation of our methods. It is worth noting that by working with uniformly bounded noise, this factor can be simplified (e.g., see Theorem 22.2 in \cite{gyorfi2006distribution}).
In simpler settings, such as standard regression or when the interaction matrix $A$ is constant (e.g., for Euclidean distances), this logarithmic factor can be removed using more advanced techniques. This topic is discussed in several works, including \cite{wang2025optimalminimaxratelearning, gyorfi2006distribution, van2000asymptotic} and the references therein.
However, in our model, the optimization depends on both the interaction matrix $A$ and the function $\phi$, which makes the problem non-convex. This difficulty makes the aforementioned techniques harder to implement. We therefore leave this for future work.
\end{remark} 
\begin{remark}
This theorem demonstrates that the nonparametric component of the estimation error of $g_\star$ avoids the usual curse of dimensionality: the leading term $M^{-\frac{2\beta}{2\beta+1}}$ does not depend on $d$. The dimension enters only through the additional parametric contribution associated with estimating $A_\star$, which is dominated under $rd \le (M/\log M)^{\frac{1}{2\beta+1}}$.

\end{remark}
The proof extends the technique in \citet[Theorem 22.2]{gyorfi2006distribution}  originally developed for the projection pursuit algorithm for multi-index models. 
Our setup differs from the multi-index setup in which one estimates $Y=\sum_ {i=1}^K f_i(b_i^\top X)+\eta$ with $\{f_i:\R\to \R\, \text{and}\, b_i\in \R^d\}_{i=1}^K$ from sample data $\{(X^m,Y^m)\}_{m=1}^M$, where the data $Y$ depends locally on projected values of single particle $X$.
Here, the attention-style model involves averaging multiple values of the pairwise interaction function, which is a composition of the unknown $\phi_\star$ and $A$. This nonlocal dependence, combined with the mixture of parametric and nonparametric estimations, presents a significant challenge. 

We list below the main challenges we address in the proof of Theorem \ref{theorem:upper_bound}. 
\begin{enumerate}[leftmargin=8mm]
    \item \emph{Nonlocal dependency.} The nonlocal dependence presents a challenge in estimating the interaction function. The forward operator
    $R_g[X]$ 
    depends on the  $g$ non-locally through the weighted sum of multiple values of $g$ of pairwise interaction. Thus, this is a type of inverse problem that raises significant hurdles in both well-posedness and the construction of estimators to achieve the minimax rate. 
To address these challenges, we show first that the inverse problem in the large sample limit is well-posed for a large class of distributions of $X$ satisfying Assumption \ref{assumption:data_dist}. A crucial condition for well-posedness of this inverse problem is the \emph{coercivity condition} studied in \cite{LiLu23coercivity,LLMTZ21,LZTM19pnas,wang2025optimalminimaxratelearning}:
$$\frac{1}{N-1}\EE\left[\|\widehat{g}_{M}-g_\star\|_{L^2_{\rho}}^2 \right]\le \calE_\infty(\widehat{g}_{M})-\calE_\infty(g_\star).$$
We prove this condition holds for a general function in our class in Lemma \ref{lem:coercivity}. Importantly, differing from these studies where the goal is to estimate the radial interaction kernel, our interaction is not shift-invariant due to the matrix and it is a $2d$-dimensional pairwise interaction function.  
    \item \emph{Tail decay noise distribution.} The proof in \cite{gyorfi2006distribution} is limited to bounded noise. We provide a more general statement for any sub-Gaussian noise. 
    This is done by decomposing the error bound now into three parts. 
    \begin{align}    \calE_\infty(\widehat{g}_{M})-\calE_\infty(g_\star)
    &\leq  \E{[T_{1,M}]} + \E{[T_{2,M}]} + \E{[T_{3,M}]} .
\end{align}
    
    The first two terms are a clever form of a bias-variance decomposition applied to a truncated version of the target. To bound these terms, we use a similar technique as in \cite{gyorfi2006distribution}.
    To control the last term $T_{3,M}$ due to the truncation, we apply a lemma proved in \cite[Lemma 2]{Kohler2011}.  
    \item \emph{Covering numbers estimates.} Since our interaction is of the form $\frac{1}{N-1}\sum_{j=1}^N \phi(X_i^\top A X_j)$ instead of working in the space of vectors, we provide a covering estimate for the class of matrices with rank less than or equal to $r$. This is done in Lemma \ref{lem:covering_upbdd}. 
\end{enumerate}

The next lemma proves the crucial condition for the well-posedness of the inverse problem of estimating the interaction function. 
    This Lemma assumes exchangeability and allows us to extract the error of the mean interaction.
Its proof is based on the exchangeability of the particle distribution and is postponed to Appendix \ref{subsec:proof_upper_bound}.   
\begin{lemma}[Coercivity]\label{lem:coercivity}
Let $g,g_\star\in\mathcal{G}_r^{\beta}(L, B_\phi, \bar{a})$. Under exchangeability of $(X_i)_{i=1}^N$ in Assumption \ref{Assump:exchangeable}
, we have
\[
    \frac{1}{N-1}\,\|g-g_\star\|_{L^2_\rho}^2\ \le\ \calE_\infty(g)-\calE_\infty(g_\star).
\]
\end{lemma}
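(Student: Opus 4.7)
My plan is to compute $\calE_\infty(g)-\calE_\infty(g_\star)$ explicitly and compare it to $\tfrac{1}{N-1}\|g-g_\star\|_{L^2_\rho}^2$, exploiting exchangeability twice: once to collapse the sum over particle indices and once to identify the joint laws of pairs of tokens. Set $h:=g-g_\star$.

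First I would expand the excess risk. Using $Y=R_{g_\star}[X]+\eta$ with $\eta$ independent of $X$ and $\E[\eta]=0$ (Assumption~\ref{assumption:noise}), the cross term $\innerp{\eta,R_h[X]}$ vanishes in expectation, so
\begin{equation*}
\calE_\infty(g)-\calE_\infty(g_\star)=\tfrac{1}{N}\,\E\bigl[\norm{R_h[X]}^2\bigr]=\tfrac{1}{N}\sum_{i=1}^N\E\bigl[R_h[X]_i^2\bigr].
\end{equation*}
By exchangeability of $(X_i)_{i=1}^N$ the law of $R_h[X]_i$ does not depend on $i$, so each summand equals $\E[R_h[X]_1^2]$ and the prefactor $\tfrac{1}{N}$ cancels.

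Next I would expand $R_h[X]_1=\tfrac{1}{N-1}\sum_{j=2}^N h(X_1,X_j)$ into diagonal and off-diagonal contributions. Exchangeability together with Definition~\ref{def:rho} identifies each of the $N-1$ diagonal terms $\E[h(X_1,X_j)^2]$ with $\E[h(X_1,X_2)^2]=\|h\|_{L^2_\rho}^2$, and each of the $(N-1)(N-2)$ off-diagonal terms with the triple correlation $\E[h(X_1,X_2)h(X_1,X_3)]$. Collecting constants yields the clean decomposition
\begin{equation*}
\calE_\infty(g)-\calE_\infty(g_\star)=\tfrac{1}{N-1}\|h\|_{L^2_\rho}^2+\tfrac{N-2}{N-1}\,\E\bigl[h(X_1,X_2)\,h(X_1,X_3)\bigr],
\end{equation*}
so the lemma reduces to showing that the triple correlation on the right is non-negative.

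The main obstacle is precisely this last step. I would handle it by conditioning on $X_1$ and appealing to the conditional independence of distinct tokens given $X_1$ that is natural in the IPS/attention setting underlying A1 (trivially true for i.i.d.\ tokens, and for general exchangeable sequences via the infinite-exchangeable/de Finetti representation). Writing $\bar h(x):=\E[h(x,X_2)\mid X_1=x]$ and applying the tower property gives
\begin{equation*}
\E\bigl[h(X_1,X_2)\,h(X_1,X_3)\mid X_1\bigr]=\bar h(X_1)^2\ge 0,
\end{equation*}
which combined with the preceding display proves the claim. The delicate point to justify carefully is this conditional-independence reduction, since finite exchangeability alone does not force the sign of the triple correlation; every other ingredient in the argument is bookkeeping.
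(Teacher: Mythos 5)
Your decomposition coincides with the paper's: after expanding $\calE_\infty(g)-\calE_\infty(g_\star)$ and collapsing indices by exchangeability, both arrive at
\[
\calE_\infty(g)-\calE_\infty(g_\star)=\tfrac{1}{N-1}\|h\|_{L^2_\rho}^2+\tfrac{N-2}{N-1}\,\E\bigl[h(X_1,X_2)\,h(X_1,X_3)\bigr],\qquad h:=g-g_\star,
\]
so the lemma reduces to non-negativity of the triple correlation, exactly as you say.

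The step you flag as delicate is precisely where the paper's own proof is silent. The paper writes the off-diagonal contribution directly as $\E\bigl\langle h(X_1,X_2),\,\E[h(X_1,X_3)\mid X_1]\bigr\rangle$ and concludes positivity via tower plus exchangeability (so it equals $\E\bigl[\E[h(X_1,X_3)\mid X_1]^2\bigr]\ge 0$). But the substitution $\E[h(X_1,X_2)h(X_1,X_3)]=\E\bigl[h(X_1,X_2)\,\E[h(X_1,X_3)\mid X_1]\bigr]$ needs $\E[h(X_1,X_2)h(X_1,X_3)\mid X_1]$ to factor as a product of conditional expectations, i.e.\ conditional uncorrelatedness of $X_2$ and $X_3$ given $X_1$ — exactly the hypothesis you invoke. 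Your worry is warranted: finite exchangeability alone does not supply it (for a jointly Gaussian exchangeable triple with common correlation $\rho\in(-1/2,0)$ and $h(x,y)=xy$, one computes $\E[h(X_1,X_2)h(X_1,X_3)]=\rho(1+2\rho)<0$). Your de Finetti fallback does repair the sign, but only by conditioning on $(X_1,\theta)$ with $\theta$ the latent mixing variable — not on $X_1$ alone, as the paper writes — and only under infinite (or mixture-of-i.i.d.) exchangeability, which Assumption~\ref{Assump:exchangeable} as stated does not provide. So: same decomposition, same reliance on a conditional-independence-type structure beyond bare exchangeability, and the clarification you request is equally needed in the paper's version of the proof.
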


\section{Lower bound\label{sec:lower_bound}}
This section establishes a lower bound for estimating $g_\star(x,y):=\phi_\star(x^\top A_\star y)$ that matches the upper bound in Theorem \ref{theorem:upper_bound}; together, these results determine the minimax rate. 

 The main challenge lies in the nonlocal dependence of the output $Y_i$ on $g_\star$, which is determined through averaging over all particles, as we don't directly observe any value of $g_\star$. Thus, the estimation of $g_\star$ is a deconvolution-type inverse problem, which is harder than estimating the single index model $Y=f(b^\top X) + \eta$ in \cite{gaiffas2007optimal}. Importantly, the nonlinear joint dependence of $g_\star$ on the unknown $\phi_\star$ and $A_\star$ further complicates the problem. 
 
 We address the challenge by first reducing the supremum over all $g_\star$ to the supremum over all $\phi_\star$ with a fixed $A_\star\in \calA_{d}(r,\bar{a})$, building on a technical result in Lemma \ref{lem:bound_full_interaction}. This reduces the problem to the minimax lower bound of estimating the interaction kernel $\phi_\star$ only. We derive this lower bound using the scheme in \cite{wang2025optimalminimaxratelearning}, a variant of the Fano-Tsybakov method in \cite{tsybakov2008introduction}.

Let $A_\star \in \calA_{d}(r,\bar{a})$ and let  
\begin{equation} \label{eq:U_def}	
U_{ij} : = X_i^{\top}A_{\star}X_j, 
\quad U\sim p_{U} (u):= \frac{1}{N(N-1)} \sum_{i=1}^N\sum_{j=1,j\neq i}^N p_{_{U_{ij}}}(u),
\end{equation}
where $p_{_{U_{ij}}}$ denotes the probability density of $U_{ij}$. Here, the density $p_{U_{ij}}$ exists and is continuous because $\mathrm{rank}(A_\star)\geq 2$ and the joint density of $(X_i,X_j)$ exists by Assumption \ref{Assump:Continous}, see Lemma \ref{lemma:U=X'AY}. Hence, the density $p_{U}$ is continuous. Furthermore, since $\|A_\star\|_{\textup{op}}\leq \bar{a}$ and $X_i\in \calC_d$, we have $|U_{ij}|\leq \bar{a} $ and $\mathrm{supp} (p_U)\subset [-\bar{a}, \bar{a}]$.  In particular, when the distribution $X$ is exchangeable, we have $p_{U_{ij}}(u)= p_{U_{12}}(u)=p_U(u)$ for all $(i,j)$, $u\in[-\bar{a},\bar{a}]$. However, our proof below works for non-exchangeable distributions. 

The next lemma allows us to reduce the supremum over all $g_\star(x,y)= \phi_\star(x^\top A_\star y)$ to all $\phi_\star$ by bounding $ \|\widehat g - g_\star\|_{L^{2}_\rho}^2$ from below by $ \|\widehat \psi - \phi_\star\|_{L^{2}_{p_U}}^2$ for a function $\widehat \psi$ determined by $\widehat g$ and $A_\star$.    Its proof can be found in Section \ref{subsec:lower_bound_proofs}. 
\begin{lemma}
	\label{lem:bound_full_interaction}
	 Suppose Assumption \ref{Assump:Continous} holds. Let $A_{\star},\hat{A}\in \calA_{d}(r,\bar{a}) $. Recall the definitions of $U_{ij}$ and $U\sim p_U$ (defined according to $A_\star$) in \eqref{eq:U_def}. Let $\phi_{\star},\hat{\phi}\in L^2_{p_U}$, and define a function $\hat{\psi}$ that is determined by $(\hat{\phi}, \hat{A},A_\star)$ and the distribution of $X$ as   
\begin{equation} \label{eq:psi2hat}
   \widehat\psi_{ij}(u) := \E \big[\widehat\phi(X_i^\top \hat{A}X_j)\big| U_{ij}=u\big], 
   \quad \hat{\psi}(u) :=  \sum_{i=1}^N\sum_{j=1,j\neq i}^N\frac{ p_{U_{ij}(u)}}{N(N-1) p_{U}(u)} \hat{\psi}_{ij(u)}.
\end{equation}	
	
     Then, the following inequality holds:
    \begin{align*}
      \|\widehat g - g_\star\|_{L^{2}_\rho}^2 
	  &\ge
	   \int_{-\bar{a}}^{\bar{a}}\bigl|\hat{\psi}(u)-\phi_{\star}(u)\bigr|^{2} p_{U}(u) \d u.  
    \end{align*}
\end{lemma}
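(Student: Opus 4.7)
The strategy is to unfold the $L^2_\rho$ norm into a pairwise sum, apply the conditional Jensen inequality to each pair to introduce the regression functions $\widehat\psi_{ij}$, and then apply Jensen once more as a weighted mixture bound to collapse the pairwise regressions into the single function $\widehat\psi$ of \eqref{eq:psi2hat}.

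First, I would use Definition \ref{def:rho} together with $g_\star(X_i,X_j)=\phi_\star(U_{ij})$ and $\widehat g(X_i,X_j)=\widehat\phi(X_i^\top \hat A X_j)$ to write
\[
\|\widehat g - g_\star\|_{L^2_\rho}^2 \;=\; \frac{1}{N(N-1)}\sum_{i=1}^N\sum_{j\neq i} \E\!\left[\bigl(\widehat\phi(X_i^\top \hat A X_j) - \phi_\star(U_{ij})\bigr)^2\right].
\]
Next, condition on $U_{ij}=u$. Since $\phi_\star(U_{ij})$ is $\sigma(U_{ij})$-measurable, the conditional Jensen inequality (equivalently, non-negativity of the conditional variance) yields, for $p_{U_{ij}}$-a.e.\ $u$,
\[
\E\!\left[\bigl(\widehat\phi(X_i^\top \hat A X_j) - \phi_\star(U_{ij})\bigr)^2 \,\bigm|\, U_{ij}=u\right] \;\geq\; \bigl(\widehat\psi_{ij}(u) - \phi_\star(u)\bigr)^2,
\]
with $\widehat\psi_{ij}$ as in \eqref{eq:psi2hat}. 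Introducing the weights $w_{ij}(u):=p_{U_{ij}}(u)/[N(N-1)\,p_U(u)]$, integrating each term against $p_{U_{ij}}$, and using the definition of $p_U$ in \eqref{eq:U_def}, I obtain
\[
\|\widehat g - g_\star\|_{L^2_\rho}^2 \;\geq\; \int_{-\bar a}^{\bar a}\!\Bigl[\sum_{i,\,j\neq i} w_{ij}(u)\bigl(\widehat\psi_{ij}(u)-\phi_\star(u)\bigr)^2\Bigr] p_U(u)\,\d u.
\]

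Finally, by the construction of $p_U$, $\sum_{i,j\neq i} w_{ij}(u) = 1$ for $p_U$-a.e.\ $u$, so $\{w_{ij}(u)\}$ is a probability distribution on the pair indices. Applying Jensen's inequality to the convex function $t\mapsto t^2$ gives
\[
\sum_{i,\,j\neq i} w_{ij}(u)\bigl(\widehat\psi_{ij}(u)-\phi_\star(u)\bigr)^2 \;\geq\; \Bigl(\sum_{i,\,j\neq i} w_{ij}(u)\,\widehat\psi_{ij}(u) \;-\; \phi_\star(u)\Bigr)^2 \;=\; \bigl(\widehat\psi(u)-\phi_\star(u)\bigr)^2,
\]
where the last equality is exactly the definition of $\widehat\psi$ in \eqref{eq:psi2hat}. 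Substituting this pointwise bound into the preceding integral yields the stated inequality.

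The main (minor) obstacle is justifying the conditioning step: one needs $p_{U_{ij}}$ to exist on $[-\bar a,\bar a]$ so that $\widehat\psi_{ij}(u)=\E[\widehat\phi(X_i^\top \hat A X_j)\mid U_{ij}=u]$ is a well-defined element of $L^2_{p_{U_{ij}}}$, and $p_U(u)>0$ on the domain of integration so that the weights $w_{ij}(u)$ make sense. Both are supplied by Assumption \ref{Assump:Continous} together with $\mathrm{rank}(A_\star)\geq 2$ (via the referenced Lemma \ref{lemma:U=X'AY}), which ensures each $p_{U_{ij}}$ is continuous; positivity of $p_U$ is automatic on its support. Notably, the argument does not require exchangeability of $X$, since both Jensen steps treat the pair indices $(i,j)$ symbolically.
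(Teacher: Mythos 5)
Your proof is correct and follows essentially the same route as the paper: both use the tower property/conditional Jensen to pass from $\E[(\widehat\phi(X_i^\top \hat A X_j)-\phi_\star(U_{ij}))^2]$ to $\int(\widehat\psi_{ij}-\phi_\star)^2 p_{U_{ij}}$, and both use the convexity of $t\mapsto t^2$ with the mixture weights $w_{ij}(u)=p_{U_{ij}}(u)/[N(N-1)p_U(u)]$ to collapse the pairwise regressions into $\widehat\psi$. The only difference is the order in which the two Jensen applications are presented (the paper bounds the mixture first, then conditions; you condition first, then bound the mixture), which is immaterial.
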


The next lemma constructs a finite family of hypothesis functions that are well-separated in $L^2_{p_U}$, while their induced distributions remain close with a slowly increasing total Kullback-Leibler divergence, enabling the application of Fano's method to derive the minimax lower bound. Its proof follows the scheme in \cite{wang2025optimalminimaxratelearning} and is postponed to Section \ref{subsec:lower_bound_proofs}.

\begin{lemma}\label{lemma:construction}
	For each data set $\{(X^{m},Y^m)\}_{m=1}^M$ sampled from the model $Y= R_{\phi_\star,A_\star}[X]+ \eta$, where $A_\star\in \calA_{d}(r,\bar{a}) $ satisfying assumptions \ref{assumption:noise_int} and \ref{Assump:Continous}, there exists a set of hypothesis functions $\{\phi_{0,M}\equiv 0,\phi_{1,M},\cdots,\phi_{K,M}\}$ and positive constants $\{C_0,C_1\}$ independent of $M, N, d, r$, where 
	\begin{equation}\label{eq:K}
	K\geq 2^{\bar K/8},\quad \text{ with } \bar K= \lceil c_{0,N} M^{\frac{1}{2\beta+1}}\rceil,  \quad  c_{0,N}=C_0 N^{\frac{1}{2\beta+1}} , 
	\end{equation}
	such that the following conditions hold:  
	\begin{enumerate}[label=$(\mathrm{D\arabic*})$]
		\item\label{Cond:a} Holder continuity:  $\phi_{k,M}\in \mathcal{C}^\beta (L,\bar a)$ and $\|\phi_{k,M}\|_\infty\leq B_\phi$  for each $k=1,\cdots, K$;
		\item\label{Cond:b} $2s_{N,M}$-separated: $\|\phi_{k,M}-\phi_{k',M}\|_{L^2_{p_U}}\geq 2s_{N,M}$ with $s_{N,M}= C_1 c_{0,N}^{-\beta} M^{-\frac{\beta}{2\beta+1}}$;   
		\item\label{Cond:c} Kullback-Leibler divergence estimate: $\frac 1{K} \sum_{k=1}^{K} \KL(\bar{\P}_k,\bar{\P}_0)\leq \alpha \log(K)$ with ${\alpha  <1/8}$, 
	\end{enumerate}
		where $ \bar{\P}_k(\cdot )={\P}_{\phi_{k,M}}(\cdot \mid X^{1},\ldots, X^{M})$ and $p_U$ is the density of $U$ defined in \eqref{eq:U_def}. 
\end{lemma}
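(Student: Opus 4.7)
The plan is to apply the Varshamov--Gilbert packing scheme (as in \cite[Ch.~2]{tsybakov2008introduction} and \cite{wang2025optimalminimaxratelearning}) with bumps localized on a subinterval where $p_U$ is bounded away from zero. Since $p_U$ is continuous on $[-\bar a,\bar a]$ by Lemma \ref{lemma:U=X'AY}, there exists an interval $I_0\subset[-\bar a,\bar a]$ of positive length on which $p_U\ge p_{\min}>0$. Fix a nonnegative $\varphi_0\in C_c^\infty((0,1))$ with $\|\varphi_0\|_\infty=1$ whose Hölder constant is absorbed into the scaling below. Partition $I_0$ into $\bar K$ equal subintervals of length $h=|I_0|/\bar K$ with left endpoints $u_j$, and set $\varphi_j(u)=L h^\beta\varphi_0((u-u_j)/h)$. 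For $\omega\in\{0,1\}^{\bar K}$ define $\phi_\omega(u)=\sum_{j=1}^{\bar K}\omega_j\varphi_j(u)$, extended by zero outside $I_0$. The Varshamov--Gilbert lemma produces $\omega^{(1)},\ldots,\omega^{(K)}$ with $K\ge 2^{\bar K/8}$ and pairwise Hamming distance at least $\bar K/8$. The hypothesis family is $\phi_{0,M}\equiv 0$ together with $\phi_{k,M}:=\phi_{\omega^{(k)}}$ for $k=1,\ldots,K$.

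Conditions \ref{Cond:a} and \ref{Cond:b} follow from the scaling. Disjointness of supports gives $\|\phi_{k,M}\|_\infty=L h^\beta\le L(2\bar a)^\beta\le B_\phi$ by the technical bound in Definition \ref{def:G_r}, and $\phi_{k,M}\in\mathcal C^\beta(L,\bar a)$ because scaling the argument of $\varphi_0$ by $h$ and the amplitude by $h^\beta$ preserves the Hölder seminorm on each subinterval. For the separation, Pythagoras on disjoint supports yields
\[
\|\phi_{k,M}-\phi_{k',M}\|_{L^2_{p_U}}^2
\ \ge\ p_{\min}\|\varphi_0\|_{L^2}^2\, L^2 h^{2\beta+1}\, d_H(\omega^{(k)},\omega^{(k')})
\ \gtrsim\ h^{2\beta}
\ \asymp\ c_{0,N}^{-2\beta}\,M^{-\frac{2\beta}{2\beta+1}},
\]
using $h\bar K=|I_0|$ and $\bar K\asymp c_{0,N}M^{1/(2\beta+1)}$; choosing $C_1$ small enough identifies the right-hand side with $(2s_{N,M})^2$ and yields \ref{Cond:b}.

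For the KL bound \ref{Cond:c}, Assumption \ref{assumption:noise_int} applied sample by sample gives
\[
\KL(\bar{\P}_k,\bar{\P}_0)\ \le\ c_\eta\sum_{m=1}^M\bigl\|R_{\phi_{k,M},A_\star}[X^m]\bigr\|^2.
\]
Cauchy--Schwarz on $R_{\phi_{k,M}}[X]_i=\tfrac{1}{N-1}\sum_{j\ne i}\phi_{k,M}(U_{ij})$ together with the definition \eqref{eq:U_def} of $p_U$ yields $\E\|R_{\phi_{k,M},A_\star}[X]\|^2\le N\|\phi_{k,M}\|_{L^2_{p_U}}^2$, while a direct bump-by-bump computation gives $\|\phi_{k,M}\|_{L^2_{p_U}}^2\le \|p_U\|_\infty L^2 h^{2\beta+1}\bar K\asymp h^{2\beta}$. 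Combining,
\[
\E\, \KL(\bar{\P}_k,\bar{\P}_0)\ \lesssim\ c_\eta MN\, h^{2\beta}
\ \asymp\ c_\eta C_0^{-2\beta}\,N^{\frac{1}{2\beta+1}}M^{\frac{1}{2\beta+1}}
\ \lesssim\ c_\eta C_0^{-(2\beta+1)}\log K,
\]
using $\bar K\asymp C_0 N^{1/(2\beta+1)}M^{1/(2\beta+1)}$ and $\log K\ge(\bar K/8)\log 2$. Choosing $C_0$ large enough forces the constant below $1/8$, giving \ref{Cond:c}.

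The main obstacle is that $\bar{\P}_k=\P_{\phi_{k,M}}(\cdot\mid X^1,\ldots,X^M)$ is conditional on the random design, so the sample-wise KL inequality produces a random bound; we handle this by integrating against the design distribution and invoking the exploration-measure identity $\E[\phi(U_{ij})^2]=\int\phi^2\,\d p_{U_{ij}}$ (or, alternatively, a Markov-type truncation to a high-probability event), in the spirit of \cite{wang2025optimalminimaxratelearning}. A secondary technical point---that $p_U$ may vanish on parts of $[-\bar a,\bar a]$---is resolved by localizing the bumps to the subinterval $I_0$ where $p_U\ge p_{\min}$.
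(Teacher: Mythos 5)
Your construction — localizing bumps to an interval $I_0$ where $p_U$ is bounded below, Varshamov--Gilbert to get the packing, Hölder scaling $\varphi_j(u)=Lh^\beta\varphi_0((u-u_j)/h)$ — is essentially the same strategy as the paper (the paper's partition is slightly more elaborate, patching together several subintervals of the open set $\{p_U>\underline a_0\}$, but your single-interval version suffices since $p_U$ is continuous and not identically zero). Conditions \ref{Cond:a} and \ref{Cond:b} are handled correctly.

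The gap is in \ref{Cond:c}. You prove only
\[
\E_X\,\KL(\bar\P_k,\bar\P_0)\ \lesssim\ c_\eta\, M N\, h^{2\beta},
\]
an \emph{expected} bound, whereas the lemma asserts the KL estimate for \emph{each} realization of the design (note the phrasing ``for each data set $\{(X^m,Y^m)\}$'' and the conditioning in $\bar\P_k=\P_{\phi_{k,M}}(\cdot\mid X^1,\dots,X^M)$). You flag the issue but your two proposed fixes do not close it: integrating against the design distribution simply restates the expected bound, and ``Markov-type truncation'' would change the statement (restricting to a high-probability design event) and is not worked out. The correct repair is elementary and does not require any expectation over $X$: because the bumps have disjoint supports $\{\Delta_\ell\}$, for every fixed $u$ one has
\[
|\phi_{k,M}(u)|^2=\sum_{\ell}\omega^{(k)}_\ell\,|\varphi_\ell(u)|^2\ \le\ L^2 h^{2\beta}\|\varphi_0\|_\infty^2\sum_{\ell}\mathbf 1_{\{u\in\Delta_\ell\}}\ \le\ L^2 h^{2\beta}\|\varphi_0\|_\infty^2,
\]
i.e.\ $\|\phi_{k,M}\|_\infty\le Lh^\beta\|\varphi_0\|_\infty$ uniformly. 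Combined with Jensen's inequality on $R_{\phi_{k,M}}[X^m]_i=\tfrac1{N-1}\sum_{j\ne i}\phi_{k,M}(u_{ij}^m)$, this yields the deterministic bound
\[
\KL(\bar\P_k,\bar\P_0)\ \le\ c_\eta\sum_{m=1}^M\|R_{\phi_{k,M}}[X^m]\|^2\ \le\ c_\eta\,L^2\|\varphi_0\|_\infty^2\, N M\, h^{2\beta}
\]
for every fixed $\{X^m\}_{m=1}^M$, which is what the lemma requires. With this replacement the rest of your argument (choosing $C_0$ large so that the constant falls below $1/8$) goes through unchanged.
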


The following theorem provides a lower minimax rate for estimating $\phi_\star$ when $A_\star$ is given. Its proof is available in Section \ref{subsec:lower_bound_proofs}. 

\begin{theorem} \label{thm:simplified_lower_bound_phi} 
 Suppose Assumptions \ref{Assump:Continous} and \ref{assumption:noise_int} hold. 
Let $p_U$ be the density of $U$ defined in \eqref{eq:U_def}. 
	Then, for any $\beta>0$, there exists a constant $ c_{0} > 0 $ independent of $M$, $d$, $r$ and $N$ such that
	\begin{equation} \label{eq:main_lower_bound}
		\liminf_{M \to \infty} \inf_{\hat{\psi}_M\in L^2_{p_U}} \sup_{\substack{\phi_{\star}\in\mathcal C^\beta(L,\bar{a}) \\ \|\phi_\star\|_\infty\le B_\phi }} \mathbb{E}_{\phi_\star} \left[ M^{\frac{2\beta}{2\beta+1}} \|\hat{\psi}_M - \phi_\star \|_{L^2_{p_U}}^2 \right] \geq c_0 N^{-\frac{2\beta}{2\beta+1}} 
	\end{equation}
    where $\hat{\psi}_M$ is estimated based on the observation model with $M$ i.i.d. samples.
	\end{theorem}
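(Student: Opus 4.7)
The plan is to reduce the nonparametric minimax problem to a hypothesis testing problem using the finite family constructed in Lemma \ref{lemma:construction}, and then invoke a Fano-type inequality to obtain the lower bound. Since Lemma \ref{lemma:construction} has already done the heavy lifting of constructing a well-separated packing with controlled KL divergence, the remaining steps are the standard closing moves of Tsybakov's scheme.

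First, I would restrict the supremum over $\phi_\star\in\mathcal C^\beta(L,\bar a)$ to the finite set $\{\phi_{0,M},\ldots,\phi_{K,M}\}$ of hypotheses provided by Lemma \ref{lemma:construction}; condition \ref{Cond:a} guarantees that this restriction is legal. For any estimator $\hat\psi_M$ measurable with respect to $\mathcal D_M$, define the minimum-distance test
\[
\widehat k := \argmin_{0\le k\le K}\|\hat\psi_M-\phi_{k,M}\|_{L^2_{p_U}}.
\]
The $2s_{N,M}$-separation \ref{Cond:b} implies that on the event $\{\widehat k\ne k\}$ one has $\|\hat\psi_M-\phi_{k,M}\|_{L^2_{p_U}}\ge s_{N,M}$, so by Markov's inequality, for each $k$,
\[
\mathbb{E}_{\phi_{k,M}}\bigl[\|\hat\psi_M-\phi_{k,M}\|_{L^2_{p_U}}^2\bigr]\ \ge\ s_{N,M}^2\,\bar{\mathbb P}_k(\widehat k\ne k).
\]

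Second, I would apply Fano's inequality (in the conditional form, conditioning on $X^1,\ldots,X^M$ as in $\bar{\mathbb P}_k$) to lower bound the average error probability of the test $\widehat k$:
\[
\frac{1}{K+1}\sum_{k=0}^{K}\bar{\mathbb P}_k(\widehat k\ne k)\ \ge\ 1-\frac{\frac{1}{K}\sum_{k=1}^{K}\KL(\bar{\mathbb P}_k,\bar{\mathbb P}_0)+\log 2}{\log K}.
\]
Condition \ref{Cond:c} bounds the numerator by $\alpha\log K+\log 2$, and since $K\ge 2^{\bar K/8}\to\infty$ as $M\to\infty$, the right-hand side is bounded below by $1-\alpha-o(1)\ge 3/4-o(1)$ for all sufficiently large $M$. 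Consequently, there exists an index $k^\star\in\{0,\ldots,K\}$ with $\bar{\mathbb P}_{k^\star}(\widehat k\ne k^\star)\ge 1/2$ for all large $M$, yielding
\[
\sup_{\phi_\star}\mathbb{E}_{\phi_\star}\bigl[\|\hat\psi_M-\phi_\star\|_{L^2_{p_U}}^2\bigr]\ \ge\ \tfrac12\, s_{N,M}^2.
\]

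Third, I would plug in the explicit rate from Lemma \ref{lemma:construction}: with $s_{N,M}=C_1 c_{0,N}^{-\beta}M^{-\beta/(2\beta+1)}$ and $c_{0,N}=C_0 N^{1/(2\beta+1)}$, we have
\[
s_{N,M}^2 \;=\; C_1^2 C_0^{-2\beta}\, N^{-\frac{2\beta}{2\beta+1}}\, M^{-\frac{2\beta}{2\beta+1}},
\]
so multiplying by $M^{2\beta/(2\beta+1)}$ and taking $\liminf_{M\to\infty}$ gives the claim with $c_0 := \tfrac12 C_1^2 C_0^{-2\beta}$. Since $c_0$ inherits its independence of $M,N,d,r$ from $C_0$ and $C_1$, this completes the proof.

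The substantive difficulty for this theorem is not the Fano/Markov machinery above but rather verifying that Lemma \ref{lemma:construction} is applicable in the deconvolution-style setting: the KL bound \ref{Cond:c} must survive the nonlocal averaging $R_{\phi_\star,A_\star}[X]$ and the continuity of $p_U$ must be invoked so that hypothesis functions separated in $L^2_{p_U}$ produce KL-close observation laws. Given that this is handled in the lemma, the theorem itself follows by a short and essentially standard argument.
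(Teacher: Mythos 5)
Your proposal is correct and follows the same Fano--Tsybakov scheme the paper uses: restrict to the finite packing from Lemma~\ref{lemma:construction}, apply Markov and a minimum-distance test, invoke the separation~\ref{Cond:b} and the KL bound~\ref{Cond:c} to control the average testing error, and plug in $s_{N,M}^2$. The paper routes the last step through Lemma~\ref{lemma:tsybakov} rather than quoting Fano directly, but the two are interchangeable versions of the same inequality. One small point to tighten: your Markov step equates $\mathbb{E}_{\phi_{k,M}}[\cdot]$ (an unconditional expectation) with a lower bound in terms of $\bar{\mathbb{P}}_k(\cdot)$, which is a \emph{conditional} probability given $X^1,\dots,X^M$ and hence a random quantity; the correct chain is to apply Markov to get $\mathbb{P}_{\phi_{k,M}}(\hat k\ne k)=\mathbb{E}_{X}[\bar{\mathbb{P}}_k(\hat k\ne k)]$ on the right, then apply the Fano bound conditionally on $X$ (it holds uniformly in the dataset, as the paper remarks), and finally take expectation over $X$. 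With that bookkeeping fixed, your argument is the paper's argument.
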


Following the above results, we can now provide a lower bound for the convergence rate when estimating $g_\star$ over all possible estimators in the worst-case scenario.  

\begin{theorem}[Minimax lower bound]\label{thm:main_lower} 
    Suppose Assumptions \ref{Assump:Continous} and \ref{assumption:noise_int} hold.  
	Then, for any $\beta>0$ there exists a constant $c_0>0$ independent of $M$, $d$, $r$ and $N$, such that the following inequality holds:
	\begin{equation}\label{eq:minimax_lb}
	  \liminf_{M\to \infty}\inf_{{\widehat{g}}}
	  \sup_{g_{\star }\in\calG_r^{\beta}(L,B_\phi,\bar a)}
			M^{\frac{2\beta}{2\beta+1}}
			\E\Bigl[\|\widehat g - g_\star\|_{L^{2}_\rho}^2	  \Bigr]	
	   \ge 
	c_0 N^{-\frac{2\beta}{(2\beta+1)}}		
	\end{equation}
	where the infimum $\inf_{\widehat{g}}$ is taken over all $\widehat{g}(x,y)= \widehat{\phi}(x^\top\widehat{A} y)$ with $\widehat{A}\in \calA_{d}(r,\bar{a})$ and $\widehat{\phi}$ such that $\widehat{g}\in L^2_\rho$. 
	\end{theorem}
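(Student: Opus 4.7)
The plan is to reduce the joint estimation lower bound to the univariate lower bound of Theorem~\ref{thm:simplified_lower_bound_phi} by combining it with the reduction in Lemma~\ref{lem:bound_full_interaction}. Since $\calG_r^{\beta}(L,B_\phi,\bar a)$ contains the slice
\[
\calG_{A_\star}:=\bigl\{g_{\phi,A_\star}(x,y)=\phi(x^\top A_\star y):\ \phi\in\calC^\beta(L,\bar a),\ \|\phi\|_\infty\le B_\phi\bigr\}
\]
for every fixed $A_\star\in\calA_d(r,\bar a)$ of rank at least $2$, the first step is to lower bound the supremum over $\calG_r^\beta$ by the supremum over $\calG_{A_\star}$ for one such choice of $A_\star$. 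By Assumption~\ref{Assump:Continous} and the discussion following \eqref{eq:U_def}, for this $A_\star$ the density $p_U$ of $U=X_i^\top A_\star X_j$ exists and is continuous, so the hypothesis of Theorem~\ref{thm:simplified_lower_bound_phi} is met.

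Next, I would fix an arbitrary estimator $\widehat g(x,y)=\widehat\phi(x^\top\widehat A y)$ with $\widehat A\in\calA_d(r,\bar a)$ and $\widehat g\in L^2_\rho$, and invoke Lemma~\ref{lem:bound_full_interaction} with this $A_\star$. It yields a function $\widehat\psi\in L^2_{p_U}$, depending only on the data through $(\widehat\phi,\widehat A)$ and on the (known) $A_\star$ and distribution of $X$, such that
\[
\|\widehat g-g_\star\|_{L^2_\rho}^2\ \ge\ \int_{-\bar a}^{\bar a}|\widehat\psi(u)-\phi_\star(u)|^2\,p_U(u)\,du\ =\ \|\widehat\psi-\phi_\star\|_{L^2_{p_U}}^2.
\]
Taking expectations and then supremum over $\phi_\star$ (equivalently over $g_\star\in\calG_{A_\star}$) preserves the inequality. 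Since every admissible $\widehat g$ induces an admissible $\widehat\psi_M\in L^2_{p_U}$ via \eqref{eq:psi2hat}, the infimum over $\widehat g$ is at least the infimum over all $\widehat\psi_M\in L^2_{p_U}$:
\[
\inf_{\widehat g}\sup_{g_\star\in\calG_{A_\star}}\E\bigl[\|\widehat g-g_\star\|_{L^2_\rho}^2\bigr]\ \ge\ \inf_{\widehat\psi_M\in L^2_{p_U}}\sup_{\phi_\star}\E\bigl[\|\widehat\psi_M-\phi_\star\|_{L^2_{p_U}}^2\bigr].
\]

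Finally, applying Theorem~\ref{thm:simplified_lower_bound_phi} to the right-hand side after multiplying through by $M^{2\beta/(2\beta+1)}$ and taking $\liminf_{M\to\infty}$ produces the constant $c_0 N^{-2\beta/(2\beta+1)}$ and completes the proof, since the choice of $A_\star$ drops out of the bound. The main obstacle is essentially bookkeeping: verifying that the map $\widehat g\mapsto\widehat\psi$ of \eqref{eq:psi2hat} is a genuine data-driven estimator in $L^2_{p_U}$ (measurability, finiteness of the conditional expectation, and the fact that $\widehat g\in L^2_\rho$ together with $|U_{ij}|\le\bar a$ ensure $\widehat\psi\in L^2_{p_U}$), and confirming that the resulting class of $\widehat\psi$'s is contained in $L^2_{p_U}$ so that enlarging the infimum is legitimate. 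Once this is in place, the theorem is a clean corollary of Lemma~\ref{lem:bound_full_interaction} and Theorem~\ref{thm:simplified_lower_bound_phi}.
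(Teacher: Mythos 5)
Your proposal follows exactly the paper's own argument: fix a single $A_\star\in\calA_d(r,\bar a)$ of rank at least $2$, lower bound the supremum over $\calG_r^\beta$ by the supremum over the slice $\calG_{A_\star}$, apply Lemma~\ref{lem:bound_full_interaction} to replace $\|\widehat g-g_\star\|_{L^2_\rho}^2$ by $\|\widehat\psi-\phi_\star\|_{L^2_{p_U}}^2$, enlarge the infimum to all of $L^2_{p_U}$, and invoke Theorem~\ref{thm:simplified_lower_bound_phi}. This is correct and is the same reduction the paper performs in its proof of Theorem~\ref{thm:main_lower}.
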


\section{Numerical simulations} \label{sec:num_sim}

In this section, we empirically verify the convergence rates predicted by our theory, emphasizing their independence from the ambient dimension $d$ and their dependence on the activation function’s smoothness. 

For all experiments, we use B-splines to represent the ground-truth activation $\phi_\star$: a degree-$p$ B-spline is $C^{p-1}$, so the degree directly controls the smoothness \citep{Lyche2017BSplinesAS}. B-splines are linear in their basis coefficients, allowing us to efficiently compute an optimal coefficient estimator by least squares. Our estimator for the interaction function $\hat g$ exploits this structure: we first fit $\phi_\star$ in the B-spline basis by least squares, then approximate the fitted activation with a multi-layer perceptron to enable backpropagation when estimating $A_\star$. This design enables us to control both the smoothness and the approximation accuracy of $\hat g$, ensuring that it achieves the minimax rate. Full simulation and parameter details appear in Appendix~\ref{app:num_sim_full}.  

Our experiments confirm the theoretical minimax rates. 
\begin{enumerate}[leftmargin=8mm]
\item[$\bullet$] \emph{Independence from the ambient dimension $d$.}
Figure~\ref{fig:sim_main}(a) compares convergence across embedding dimensions $d\in\{1,5,30\}$. In the log-log plots, the slopes (which encode the rates) are nearly parallel and close to the theoretical exponent $-2\beta/(2\beta+1)$ for all three dimensions, indicating that the convergence rate is independent of $d$.
\item[$\bullet$] \emph{Dependence on the activation function’s smoothness.} Figure \ref{fig:sim_main}(b) reports rates for varying smoothness exponents $\beta$, controlled by the B-spline degree used to represent $\phi_\star$. As the spline degree (and hence $\beta$) increases, the log-log slope steepens as predicted by theory: for example, the empirical slopes are $\approx -0.81$ for degree $P=3$ and $\approx -0.899$ for $P=8$, closely matching the theoretical values $-0.80$ and $-0.933$.
\end{enumerate}
 
The two plots illustrate that the minimax rate is fully determined by the smoothness $\beta$ and it is doesn't change with the dimension. 

\begin{figure}[t]
\centering
\begin{minipage}{0.48\linewidth}
  \includegraphics[width=\linewidth]{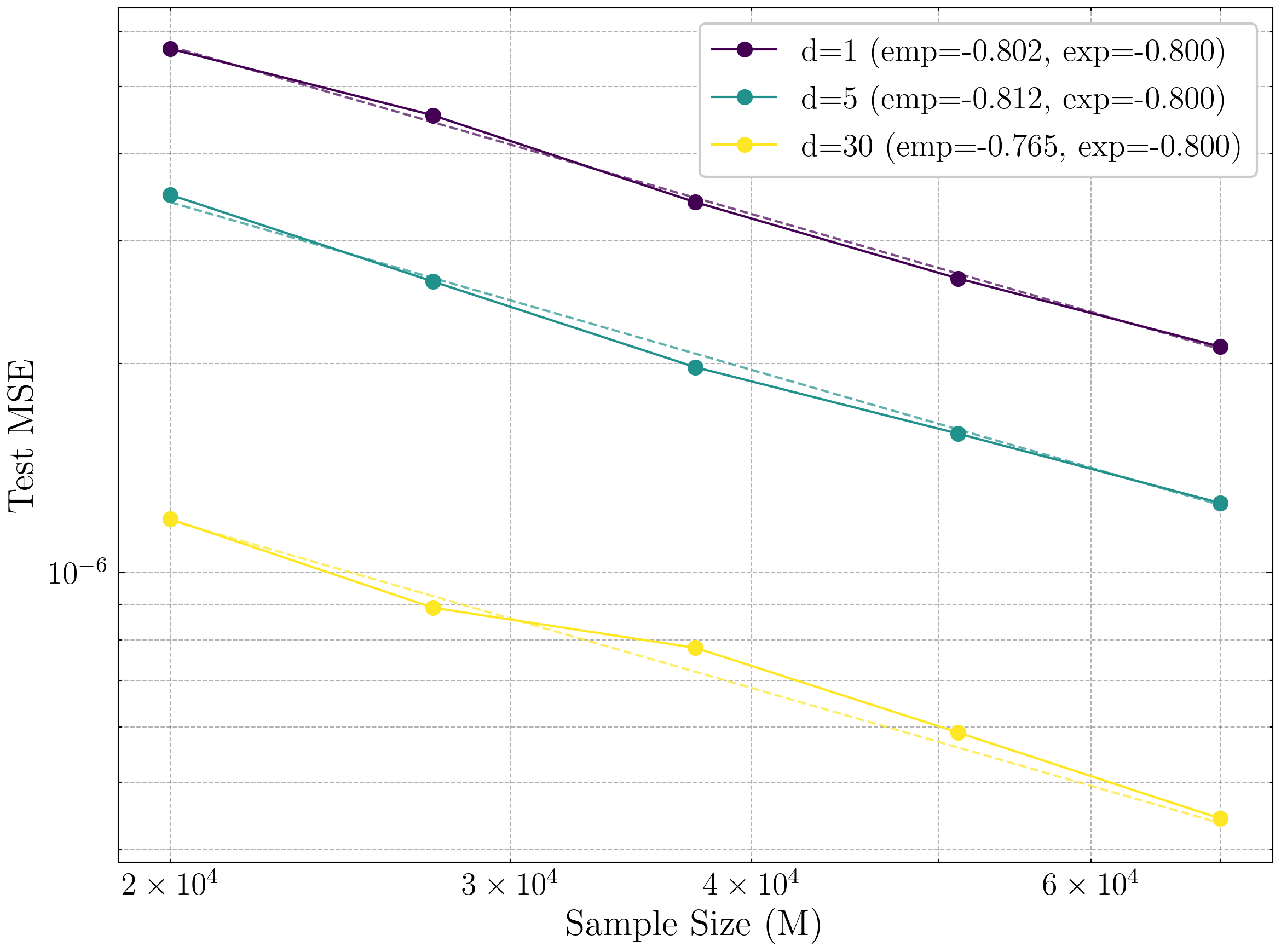}
  \caption*{(a)}
\end{minipage}\hfill
\begin{minipage}{0.48\linewidth}
  \includegraphics[width=\linewidth]{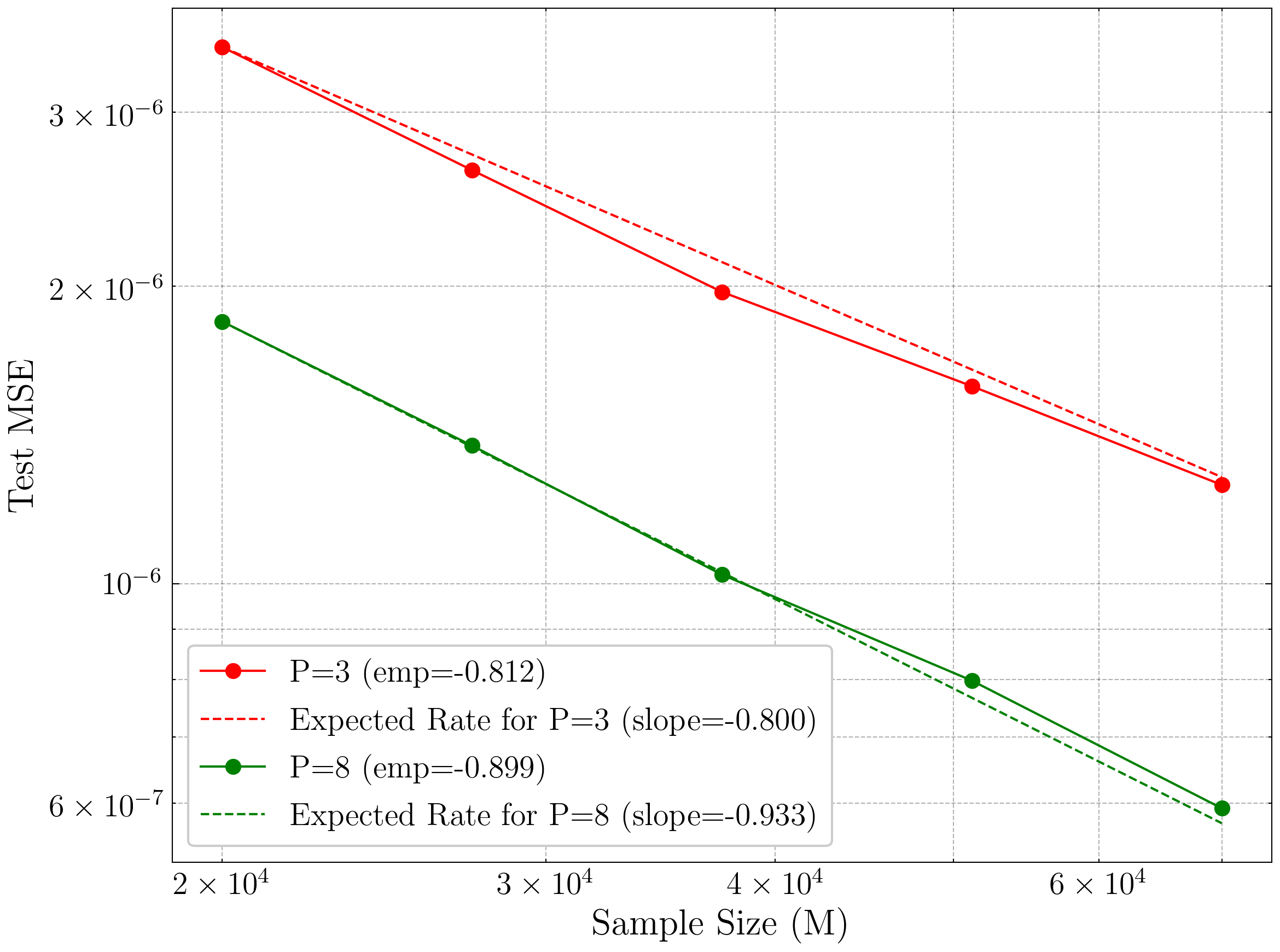}
  \caption*{(b)}
\end{minipage}
\caption{\textbf{(a)} Convergence rates with $d\in\{1,5,30\}$.  Composed test Mean Squared Error (MSE) 
vs.\ sample size $M$ in log scale; dashed lines show the expected rate $M^{-2\beta/(2\beta+1)}$; and the markers represent the median across seeds. The convergence rates are nearly the same for different values of $d$. \textbf{(b)} Convergence rates with varying smoothness exponents, which are controlled by the spline degree of $\phi_\star$ and the estimator, with $P_{\text{true}}=P_{\text{est}}\in\{3,8\}$, corresponding to $\beta\in\{2,7\}$ and expected slopes $-0.800$ and $-0.933$.  The parameters in each simulation are described in  Appendix \ref{app:num_sim_full}. }
\label{fig:sim_main}
\end{figure}
\section{Conclusions}

We have established minimax convergence rates for estimating the pairwise interaction functions in self-attention style models. The rates include a leading term $M^{-2\beta/(2\beta+1)}$ that is independent of $d$ which controls the parametric rate of estimating $A_\star$ when the sample size adheres to $rd \le (M/\log M)^{\frac{1}{2\beta+1}}$ condition. 
This result is achieved using a direct connection to interacting particle systems (IPS), we have proved that under a coercivity condition, one can learn the interaction function at an optimal rate $M^{-2\beta/(2\beta+1)}$ with $\beta$ being the smoothness of the function. 
 Notably, under the $rd \le (M/\log M)^{\frac{1}{2\beta+1}}$ condition, this rate is independent of both the embedding dimension and the number of tokens. Our analysis extends beyond the standard assumption of independent, isotropic token distributions to allow for correlated and anisotropic token distributions.
These results illuminate how attention can avoid the curse of dimensionality in high-dimensional regimes. Viewing attention through the IPS lens suggests a broad research agenda for understanding the attention models. Promising next steps include extending the theory to multi-head attention, residual connections and self-attention interactions induced by the value matrix. Advances in these directions will improve our understanding of learning mechanisms and generalization in transformers.

\section*{Acknowledgments}
This project is part of the NSF-BSF grant 0603624011 and DMS 2511283. XW was partially supported by the Sun Yat-sen University research startup. IS was partially supported by the Israel Science Foundation grant 777/25 and by the Alon fellowship. FL was partially supported by the NSF grant DMS 2238486.

\bibliography{ref_IPS_learning2404,ref_IPS_sampling,ref_FeiLU2024_9,ref_grn2411,ref_kernel_methods,ref_NN_IPS2411,ref_minimax2406,ref_unsupervised,ref_IPS_graph}

\bibliographystyle{plainnat}

\appendix

\section{Appendix: Reduction from attention to IPS attention model\label{app:softmax_connect}}
In this section, we provide a direct connection between the IPS attention model and the softmax self-attention layer, which typically includes an additional normalization step. 
Consider a sequence of tokens $\{X_i\}_{i=1}^N$. The output of the softmax self-attention layer is typically composed of an attention block with learnable query, key, and value matrices, $W_Q,W_K\in\R^{d\times d_k}$ with $d_k\le d$ and $W_V\in\R^{d\times d_v}$ that compute 
\begin{equation}
{Y} = \mathrm{Att}(Q,K,V)=\mathrm{softmax} \Big(\tfrac{QK^\top}{\sqrt{d_k}}\Big)V,\qquad
Q=XW_Q,\; K=XW_K,\; V=XW_V.
\end{equation}
As explained in the main text, we denote by ${A} =\frac{1}{\sqrt{d_k}} W_Q W_K^\top$ the score interaction matrix. Using the definition of the softmax function, the output of the softmax self-attention layer for each particle can then be written  
\[ 
{Y}_i = \sum_{j=1}^{N} \frac{e^{\beta X_i^\top {A} X_j}}{Z_i[X]}V_j, \quad Z_i[X] = \sum_{\ell=1}^{N} e^{\beta  X_i^\top {A} X_\ell}
\]
with $\beta>0$ being the inverse temperature parameter. 
When the number of particles is large, the partition function $Z_i[X]$ concentrates around its mean-field value with respect to the empirical distribution of the particles.
If we denote by $\mu$ the continuum limit of the empirical measure, then $Z_i[X]\approx N\mathcal{Z}_i = N\int  
e^{\beta X_i^\top {A} y}\d \mu(y)$ conditioned on the $i$-th particle.

For the IPS surrogate we consider in this paper, we adopt two standard simplifications \citep{SanderICAIS2022,geshkovski2023mathematical,bruno2025emergence}: we set $d_v=1$, and treat $\mathcal Z_i$ as a constant (independent of $X$) that can be absorbed into the nonlinearity, and focus only on the self-interaction for $i\ne j$, and setting  
$V_j$ to be a constant, we get our IPS Attention Model:
\[
{Y}_i 
=\frac{1}{N} \sum_{j\neq i} {\phi}(X_i^\top {A} X_j). 
\]
This reduction is similar in spirit to the surrogate model (USA) presented in  \cite{geshkovski2023mathematical}. We note that a possible extension of our model to account for the softmax normalization would be to learn a function for each particle, ${\phi}_i.$ We suspect it will not change the overall rate. In fact, as stated in \cite{geshkovski2023mathematical}, this reduction seems to capture the essence of the dynamics of the self-attention (SA) model. Therefore, to simplify the setting, we focus on estimating a single function.

\section{Appendix: upper bound proofs}
We begin by reducing the distribution of the pair-wise particles to the distribution of one pair by exchangeability. The exchangeability not only simplifies the proof of the upper bound, but also provides a sufficient condition for the coercivity, which makes the inverse problem well-posed.  
\begin{lemma}[Exploration measure under exchangeability]\label{lemma:rho_exchan}
 Under Assumption \ref{assumption:data_dist}, the measure $\rho$ is the distribution of $(X_1,X_2)\in {\R^d}\times  {\R^d}$ and has a continuous density.
\end{lemma}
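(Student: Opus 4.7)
The plan is to unpack the definition of $\rho$ directly and then invoke the two parts of Assumption \ref{assumption:data_dist} in sequence: exchangeability (A1) to collapse the double sum, and continuity of the pairwise density (A2) to obtain the density of $\rho$.

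First I would take expectations inside the definition of $\rho_M$. Since the $M$ samples $\{X^m\}_{m=1}^M$ are i.i.d., for any Lebesgue measurable $B\subset \R^d\times \R^d$ one has
\[
\rho(B)=\E[\rho_M(B)]=\frac{1}{N(N-1)}\sum_{i=1}^N\sum_{\substack{j=1\\ j\neq i}}^N \P\bigl((X_i,X_j)\in B\bigr),
\]
where I used linearity of expectation and the fact that for fixed $(i,j)$ the indicator $\mathbf{1}_{\{(X_i^m,X_j^m)\in B\}}$ has the same distribution across $m$. Next I would apply the exchangeability assumption \ref{Assump:exchangeable}: for every pair $(i,j)$ with $i\neq j$, the joint law of $(X_i,X_j)$ coincides with the joint law of $(X_1,X_2)$. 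Substituting this, every term in the double sum is equal, so
\[
\rho(B)=\P\bigl((X_1,X_2)\in B\bigr),
\]
which is precisely the statement that $\rho$ is the distribution of $(X_1,X_2)\in\R^d\times \R^d$.

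For the density claim, I would invoke Assumption \ref{Assump:Continous}, which posits a continuous joint density for every pair $(X_i,X_j)$, in particular for $(X_1,X_2)$. Denoting this density by $p_{X_1,X_2}$, the identity above yields $\rho(B)=\int_B p_{X_1,X_2}(x,y)\,dx\,dy$, so the Radon--Nikodym derivative of $\rho$ with respect to Lebesgue measure on $\R^d\times \R^d$ is the continuous function $p_{X_1,X_2}$.

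There is no real obstacle here: the proof is a short chain of elementary observations (linearity of expectation, exchangeability, and the existence of a continuous pairwise density), and no estimation or analysis of convergence rates is required. The only care needed is in the first step, to write $\rho$ as $\E[\rho_M]$ rather than a limit (both expressions coincide because $\E[\rho_M(B)]$ is independent of $M$, so the limit is immediate and does not depend on measure-theoretic convergence arguments).
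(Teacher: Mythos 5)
Your proof is correct and takes essentially the same route as the paper: expand the definition of $\rho$ as $\E[\rho_M]$, use exchangeability to collapse all $N(N-1)$ terms in the double sum to the law of $(X_1,X_2)$, and then invoke Assumption \ref{Assump:Continous} for the continuous density. Your version simply spells out the linearity-of-expectation step that the paper states implicitly.
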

\begin{proof}
The exchangeability in Assumption \ref{assumption:data_dist} implies that the distributions of $(X_i,X_j)$ and $(X_1,X_2)$ are the same for any $i\neq j$. Hence, by definition, the exploration measure is the distribution of the random variables $(X_1,X_2)$:
\begin{equation*}
	\rho(B) = \P{((X_1,X_2)\in B)}
\end{equation*}
which has a continuous density by Assumption \ref{assumption:data_dist}.  
\end{proof}

\subsection{Proof of the upper bound in theorem \ref{theorem:upper_bound}\label{subsec:proof_upper_bound}}

In this section, we provide the proof of the upper bound.

We begin with the proof of the key coercivity lemma, which is crucial in bounding the error of the interaction function and making the inverse problem well-posed in the large sample limit. 

\begin{proof}[Proof of Lemma \ref{lem:coercivity}]
    Recall $R_g(X)_i=\frac{1}{N-1}\sum_{j\ne i}g(X_i,X_j)$. By definition
	\[
	\calE_\infty(g)-\calE_\infty(g_\star)
	=\frac1N\EE\langle R_{g-g_\star}[X],R_{g-g_\star}[X]\rangle
	=\frac1{N(N-1)^2}\sum_{i=1}^N\sum_{j\ne i}\sum_{j'\ne i}\EE\langle \Delta_{ij},\Delta_{ij'}\rangle,
	\]
	where $\Delta_{ij}=(g-g_\star)(X_i,X_j)$ and $\sum_{j\ne i}=\sum_{j=1,j\ne i}^N$. By exchangeability,
	\begin{align*}
	\frac1{N(N-1)^2}\sum_{i=1}^N\sum_{j\ne i}\sum_{j'\ne i}
	\EE\langle \Delta_{ij},\Delta_{ij'}\rangle
	&=\frac{1}{N-1}\EE\|\Delta_{12}\|^2
	+\frac{N-2}{N-1}\EE\langle \Delta_{12},\EE[\Delta_{13}\mid X_1]\rangle \\
	&\ge\ \frac{1}{N-1}\EE\|\Delta_{12}\|^2,
	\end{align*}
	since $\EE\|\EE[\Delta_{13}\mid X_1]\|^2\ge0$. The statement of the Lemma follows. 
\end{proof}
\begin{proof}[Proof of Theorem \ref{theorem:upper_bound}]   
The proof is divided into five steps. 
\paragraph{Step 1: Error decomposition.}
In this step, we decompose the mean squared error $\EE[\|\widehat{g}_{M}-g_\star\|_{L^2_{\rho}}^2]$
to two terms. 
Using Lemma \ref{lem:coercivity}, i.e., the coercivity condition and the definition of $\calE_{\infty}(g)=\frac{1}{N}\EE \left[  \|Y-R_{g}[X]\|^2 \right]$, we have for $c_{\calH} = \frac{1}{N-1}$
\begin{align}\label{eq:g_L2'}
	c_{\calH}
	&\EE \left[\int |\widehat{g}_{M}(x,y) - g_\star(x,y)|^2\d \rho (x,y) 	\right] \nonumber \\
	&\leq \calE_{\infty}(\widehat{g}_{M}) - \calE_{\infty}(g_{\star}) \nonumber\\ 
	&= \frac{1}{N}\EE [  \|Y-R_{\widehat{g}_{M}}[X]\|^2 ] - \frac{1}{N}\EE \left[  \|Y-R_{g_{\star}}[X]\|^2 \right]\nonumber \\
	&= \frac{1}{N}\EE \big[\EE \left[  \|Y-R_{\widehat{g}_{M}}[X]\|^2 \mid \calD_M \right]\big] - \frac{1}{N}\EE \left[  \|Y-R_{g_{\star}}[X]\|^2 \right] \,.
\end{align}

Let $B_M:=c_1 \log(M)$ with some constant $c_1>0$ and $Y_{M}:=\min(B_M,\max(-B_M,Y))$. 
Let us denote 
\begin{equation}\label{eq:T_1_M}
  T_{1,M}:=2[\calE_{M}(\widehat{g}_{M}) - \calE_{M}(g_{\star})] 
\end{equation} and 
\begin{align}\label{eq:T_2_M}
T_{2,M}  &:= \frac{1}{N}\EE\big[\| {Y_{M}}-R_{\widehat{g}_{M}}[X]\|^2\mid \calD_M \big] - \frac{1}{N}\EE\big[ \| {Y_{M}}-R_{g_\star}[X]\|^2 \big]- T_{1,M}\,,  \\ \label{eq:T_3_M}
T_{3,M}  &:= \frac{1}{N} \EE \left[  \|Y-R_{\widehat{g}_{M}}[X]\|^2 \mid \calD_M \right] - \frac{1}{N}\EE \left[  \|Y-R_{g_{\star}}[X]\|^2 \right] \\\nonumber
	&\qquad - \frac{1}{N}\EE\big[\| {Y_{M}}-R_{\widehat{g}_{M}}[X]\|^2\mid \calD_M \big] + \frac{1}{N}\EE\big[ \| {Y_{M}}-R_{g_\star}[X]\|^2 \big] \,.   
\end{align}
By \eqref{eq:g_L2'}, we can decompose the upper bound of the mean squared error as  
\begin{align}\label{eq:g_L2}
    \EE\left[\|\widehat{g}_{M}-g_\star\|_{L^2_{\rho}}^2 \right]
    &=\EE \left[\int |\widehat{g}_{M}(x,y) - g_\star(x,y)|^2\d \rho (x,y) \right] \nonumber \\
    &\leq c_{\calH}^{-1} \Big( \E{[T_{1,M}]} + \E{[T_{2,M}]} + \E{[T_{3,M}]} \Big)\,.
\end{align}
We shall proceed with our proof by bounding $\{\E{[T_{i,M}]}\}_{i=1}^3$ in the following Steps 2-4 via approximation error estimate, covering number estimate, and sub-Gaussian property, respectively.

\paragraph{Step 2: Bounding $\EE[T_{1,M}]$ via polynomial approximation.} Recall that $\widehat{g}_{M}$ is the minimizer of the empirical error functional $\calE_{M}(g)$ over the estimator space $\calG_{r,K_M}^s$. Thus, we have
\[
\calE_{M}(\widehat{g}_{M}) - \calE_{M}(g_{\star}) \leq \calE_{M}(g_{\star,\mathcal{G}_r}) - \calE_{M}(g_{\star}),
\]
where $g_{\star,\calG^{s}_{r,K_M}}$ is a minimizer in  $\calG_{r,K_M}^s$ attaining $\inf_{g \in \calG_{r,K_M}^s} \left[ \calE_{\infty}(g) - \calE_{\infty}(g_{\star}) \right]$ (see, \cite[Lemma 11.1]{gyorfi2006distribution}).  
Therefore,
\begin{align}
    \frac{1}{2} \mathbb{E}[T_{1,M}] &= \mathbb{E}\left[ \calE_{M}(\widehat{g}_{M}) - \calE_{M}(g_{\star}) \right] \nonumber \\
	&\leq \mathbb{E}\left[ \calE_{M}(g_{\star,\calG_{r,K_M}^s}) - \calE_{M}(g_{\star}) \right] \nonumber \\
	&=\calE_{\infty}(g_{\star,\calG_{r,K_M}^s}) - \calE_{\infty}(g_{\star})
    = \inf_{g \in \calG_{r,K_M}^s} \left[ \calE_{\infty}(g) - \calE_{\infty}(g_{\star}) \right]\,. \label{Ineq:T1_inf}
\end{align}
Note that 
\begin{align}
    \calE_{\infty}(g) - \calE_{\infty}(g_{\star}) &= \frac{1}{N}
    \EE \left[  \|R_{g_{\star}-g}[X]+\eta\|^2 - \|\eta\|^2\right] \nonumber \\
	&= \frac{1}{N} \sum_{i=1}^{N} \EE\left[\left|\frac{1}{N-1}\sum_{j=1, j\neq i}^N [g - g_\star](X_i,X_j) \right|^2\right]\,. \label{Ineq:T1_calE}
\end{align}
Applying Jensen's inequality to get
\begin{align}
\frac{1}{N} \sum_{i=1}^{N} \EE\left[\left|\frac{1}{N-1}\sum_{j=1, j\ne i}^N [g - g_\star](X_i,X_j) \right|^2\right] \le \frac{1}{N(N-1)} \sum_{i=1}^{N} \sum_{j=1, j\neq i}^{N} \mathbb{E}\left[\left|[g - g_\star](X_i,X_j) \right|^2 \right] 
\end{align}
and by the exchangeability assumption \ref{Assump:exchangeable}, we have that the expectations are equal and thus

\begin{align}\label{Ineq:T1_1}
	\frac{1}{2} \mathbb{E}[T_{1,M}] &\leq\inf_{g\in \calG_{r,K_M}^s}\|g-g_{\star}\|^2_{L^2_{\rho}}  \nonumber \\
	&=  \inf_{ {\phi \in \Phi_{K_M}^{s}},\|A\|_{\op}\le  \bar{a}}\int |\phi(\ip{x,y}_A)-\phi_{\star}(\ip{x,y}_{A_\star})|^2\d \rho(x,y)\,.
\end{align}

Next, setting $A = A_\star$ in \eqref{Ineq:T1_1}, it is clear that
\begin{align*}
    \frac{1}{2} \mathbb{E}[T_{1,M}]
	\leq &\inf_{\phi \in \Phi_{K_M}^{s}}\int |\phi(\ip{x,y}_{A_\star})-\phi_{\star}(\ip{x,y}_{A_\star})|^2\d \rho(x,y) \\
    \leq  &\inf_{\phi \in \Phi_{K_M}^{s}} \left\{\sup_{u\in[-\bar{a},\bar{a}] } |\phi(u)-\phi_{\star}(u)|^2 \right\} \,.
\end{align*}
Then, one can choose $\phi$ following the construction in \cite[Lemma 11.1]{gyorfi2006distribution} and that $\phi_\star \in C^\beta(L,\bar{a})$ which shows that there exists a piecewise polynomial function $f$ of degree $\beta$ or less with respect to an equidistant partition of $[-\bar{a},\bar{a}]$ consisting of $K_M$ intervals of length $1/K_M$. For any $x, y \sim \rho$ and any matrix $A \in \mathbb{R}^{d \times d}$ such that $u = \langle x, y \rangle_A \in [-\bar{a},\bar{a}]$, we will choose the dimension $K_M$ (to be specified later) so that 
\begin{align*}
	\sup_{u\in[-\bar{a},\bar{a}] } |\phi(u)-\phi_{\star}(u)|\le \frac{L {\bar a}^\beta}{ \lfloor\beta\rfloor! K_M^\beta }\,.
\end{align*}
We thus conclude that  
\begin{align}\label{Ineq:T1_2}
\EE[T_{1,M}] \le \frac{2 L^2 {\bar a}^{2\beta}}{ (\lfloor\beta\rfloor!)^2K_M^{2\beta}} \,.
\end{align}

\paragraph{Step 3: Bounding $\EE[T_{2,M}]$ via covering number estimates.}
We introduce the following notations to simplify the presentation. Define $\Delta\calE_{M}^{(i)}(g) := \calE_{M}^{(i)}(g) - \calE_{M}^{(i)}(g_{\star})$. Also, we denote 
\begin{align*}
	\Delta\calE_{\calD_M}^{(i)}(\widehat{g}_{M}) &:= \EE[\left|Y_i - R_{\widehat{g}_{M}}[X]_i\right|^2\mid \calD_M]-\EE[\left|Y_i - R_{g_\star}[X]_i\right|^2]\,,
\end{align*}
where $\widehat{g}_{M}\in \calG_{r,K_M}^s$ depends on the samples $\calD_M=\{(X^m,Y^m)\}_{m=1}^M$ and write similarly
\begin{align*}
    \Delta\calE_{\infty}^{(i)}(g) &:= \EE[\left|Y_i - R_g[X]_i\right|^2]-\EE[\left|Y_i - R_{g_\star}[X]_i\right|^2]\,,
\end{align*}
for any $g \in \calG_{r,K_M}^s$. Note that $\Delta\calE_{\calD_M}^{(i)}(g)=\Delta\calE_{\infty}^{(i)}(g)$ for any (deterministic) $g \in \calG_{r,K_M}^s$.

It is straightforward to observe that $\EE[T_{2,M}]$ can be expressed as the average error per particle, that is, $ \EE[T_{2,M}] = \frac{1}{N}\sum_{i=1}^N \EE[T_{2,M}^{(i)}]$ 
where $T_{2,M}^{(i)}:= \Delta\calE_{\calD_M}^{(i)}(\widehat{g}_{M})- T_{1,M}^{(i)}$ with $T_{1,M}^{(i)}=2\Delta\calE_{M}^{(i)}(\widehat{g}_{M})$. 
To estimate $\EE[T_{2,M}^{(i)}]$, it suffices to bound the following probability tail for the $i$-th particle  
\begin{align}\label{Ineq:tail_T2}
 \P\left\{T_{2,M}^{(i)}>t\right\} &= \P\left\{ \Delta\calE_{\calD_M}^{(i)}(\widehat{g}_{M})- \Delta\calE_{M}^{(i)}(\widehat{g}_{M})>\frac{1}{2}[t+ \Delta\calE_{\calD_M}^{(i)}(\widehat{g}_{M})]\right\} \nonumber \\
 &\le \P\left\{\exists f\in \calG_{r,K_M}^s \, : \Delta\calE_{\calD_M}^{(i)}(f) - \Delta \calE_{M}^{(i)}(f)>\frac{1}{2} [t+ \Delta\calE_{\calD_M}^{(i)}(f)]\right\} \nonumber \\
 &= \P\left\{\exists f\in  \calG_{r,K_M}^s \, : \Delta\calE_{\infty}^{(i)}(f) - \Delta \calE_{M}^{(i)}(f)>\frac{1}{2} [t+ \Delta\calE_{\infty}^{(i)}(f)]\right\}\,.  
\end{align}
We first observe that the probability tail above depends on the joint distribution of all particles since the term $\Delta \calE_{M}^{(i)}(f)$ in \eqref{Ineq:tail_T2} involves all particles. To bound the tail probability of $T_{2,M}^{(i)}$, we invoke \citet[Theorem 11.4]{gyorfi2006distribution}, which is applicable to classes of uniformly bounded functions. In our setting, this condition translates to the boundedness of the operator $R_g$. Specifically, if $\|g\|_{\infty} \le B_\phi$, then for all $i \in [N]$, we have $|R_g[X]_i| \le B_\phi$.  Recall that $B_M:=c_1\log(M)$ and $\calC_d:=([0,1]/\sqrt{d})^{d}$. 
Applying Theorem 11.4 in \cite{gyorfi2006distribution} to \eqref{Ineq:tail_T2} (with $\alpha = \beta = t/2$ and $\epsilon = 1/2$), we get for arbitrary $t \ge 1/M$
\begin{align} \label{eq:combined_covering}
 \P\left\{T_{2,M}^{(i)}>t\right\}
 &\le 14 \sup_{\{X^{m}\in \calC_d^N\}_{m=1}^M} \calN_1\left(\frac{t}{ {80 B_M}},\calG_{r,K_M}^s, \rho_M
  \right)e^{-\frac{t M}{24\cdot  {214 B^4_M}}} \nonumber \\
  &\le 14 \sup_{\{X^{m}\in \calC_d^N\}_{m=1}^M} \calN_1\left(\frac{1}{ {80 B_M}M},\calG_{r,K_M}^s, \rho_M
  \right)e^{-\frac{t M}{24\cdot  {214 B^4_M}}}
\end{align}
where $\calN_1(\e,\calG_{r,K_M}^s,\rho_M)$ is the empirical covering number with respect to the $L^1_{\rho}$ radius smaller than $\e$ over the function class $\calG_{r,K_M}^s$.

Employing the identity $\E[X] = \int_0^\infty \P(X>t) \d t$ and the standard integral decomposition $\int_0^\infty=\int_0^\varepsilon + \int_\varepsilon^\infty$ with $\varepsilon$ to be determined, we get for $\e \geq 1/M$
\begin{equation}\label{eq:T_2_bound_1}
    \E{[T_{2,M}^{(i)}]} = \int_0^\infty \P(T_{2,M}^{(i)} > t) \d t \le \varepsilon + \int_\varepsilon^\infty \P(T_{2,M}^{(i)} > t) \d t .
\end{equation}

Then, substituting \eqref{eq:combined_covering} in \eqref{eq:T_2_bound_1} leads to
\begin{equation} \label{eq:T_2_bound_2}
     \E{[T_{2,M}^{(i)}]} \leq \varepsilon + \int_\varepsilon^\infty   14 \sup_{\{X^{m}\in \calC_d^N\}_{m=1}^M} \calN_1\left(\frac{t}{ {80 B_M}},\calG_{r,K_M}^s, \rho_M
  \right)e^{-\frac{t M}{24\cdot {214 B^4_M}}} \d t.
\end{equation}
Notice that we can bound the covering number by its value at $1/M$ since $\varepsilon\ge 1/M$ inside the integral in \eqref{eq:T_2_bound_2} when $t \ge \varepsilon$. It then follows that 
\begin{equation} \label{eq:T_2_bound_3}
     \E{[T_{2,M}^{(i)}]} \leq \varepsilon + 14 \sup_{\{X^{m}\in \calC_d^N\}_{m=1}^M} \calN_1\left(\frac{1}{ {80 B_M} M},\calG_{r,K_M}^s, \rho_M \right) \int_\varepsilon^\infty   e^{-\frac{t M}{24\cdot 214\cdot B^4}} \d t.
\end{equation}
We can now apply the estimate of covering number in \eqref{eq:final_cover_bound}: 
\begin{align} 
    \E{[T_{2,M}]} &=\frac{1}{N} \sum_{i=1}^N \E{[T_{2,M}^{(i)}]}\nonumber \\ 
    &\leq \varepsilon + 42\cdot   
		(L_{1,M} M )^{2rd }
		(L_{2,M} M )^{2K_M(s+1)+2} 
           \cdot\frac{L_{3,M}}{M} e^{-\frac{\varepsilon M}{L_{3,M}}} \label{eq:T_2_bound_4}
\end{align}
with $L_{1,M}:=12 r\bar a B_\phi \cdot {80B_M}$, $L_{2,M}:=24 eB_\phi\cdot {80B_M}$ and $L_{3,M}:=24 \cdot {214 \cdot B^4_M}$. 
Since the quantity $\varepsilon$ on the right-hand side of \eqref{eq:T_2_bound_4} is arbitrary, we may tighten the bound by choosing 
\begin{align*}
    \varepsilon = \frac{L_{3,M}}{M} \cdot \log\big[ 42 \cdot   
		(L_{1,M} M )^{2rd }
		(L_{2,M} M )^{2K_M(s+1)+2} 
           \big]\,,
\end{align*}
which yields the desired upper bound:
\begin{align}\label{eq:T_2_bound_5}
    \E{[T_{2,M}]} &\le \frac{L_{3,M}}{M}\Big[1+\log\big(42\big)+2rd  \log\big(L_{1,M} \big) \nonumber \\ 
    &\qquad+2(K_M(s+1)+1) \cdot \log \big(L_{2,M} \big) + 2(K_M(s+1)+1+rd ) \cdot \log(M)\Big] \nonumber \\ 
    &\le \frac{L_{3,M} (20K_M s +5rd )\log(M)}{M}
\end{align}
when $M\geq \max(42\cdot L_{1,M}^{2rd }, L_{2,M})$.

\paragraph{\textbf{Step 4:} Bounding $\EE[T_{3,M}]$ via sub-Gaussian property.} As $R_{g_\star}[X]_i \le B_\phi<B_M$ and $R_{\widehat{g}_{M}}[X]_i \le B_\phi<B_M$ a.s.
We assume that the noise $\eta$ is sub-Gaussian and that $R_g$ is bounded for any $g\in \calG_r^\beta$. 
Thus, using Lemma 2 in \cite{Kohler2011} with $Y_{M}$ and $B_M$ given above, one can obtain that with 
\begin{align}
	\Big|\EE[T_{3,M}] \Big|
	&\leq \frac{1}{N}\sum_{i=1}^N \Big|\EE\big[ | {Y_{i,M}}-R_{g_\star}[X_i]|^2 \big]-\EE\big[ | {Y_i}-R_{g_\star}[X_i] |^2 \big]\Big| \nonumber \\
	&+ \frac{1}{N}\sum_{i=1}^N \Big|\EE\big[ | {Y_{i,M}}-R_{\widehat{g}_M}[X_i] |^2 \big]-\EE\big[ | {Y_i}-R_{\widehat{g}_M}[X_i] |^2 \big]\Big| \nonumber \\
	&\leq c_2 \frac{\log(M)}{M}\,, \label{eq:T_3_bound}
\end{align}
for some constant $c_2>0$ independent of $M$ and $N$.

\paragraph{\textbf{Step 5:} Deriving the upper optimal rate.}
We now combine the bounds from \eqref{Ineq:T1_2}, \eqref{eq:T_2_bound_4} and \eqref{eq:T_3_bound}, which control the terms $\mathbb{E}[T_{1,M}]$, $\mathbb{E}[T_{2,M}]$ and $\mathbb{E}[T_{3,M}]$, respectively, to obtain an upper bound on the total error in \eqref{eq:g_L2}:
\begin{align}\label{eq:combined_terms_error}
    &\EE\Big[\|\widehat{g}_{M}-g_\star\|_{L^2_{\rho}}^2 \Big] \nonumber \\
    &\le c_{\calH}^{-1} \left( \frac{2 L^2 {\bar a}^{2\beta}}{ (s !)^2K_M^{2\beta}} + 
    \frac{L_{3,M} (20K_M s +5rd )\log(M)}{M} + {c_2 \frac{\log(M)}{M}} \right) 
    \\
    &\le c_{\calH}^{-1} \left( \frac{2 L^2 {\bar a}^{2\beta}}{ (s !)^2K_M^{2\beta}} + 
    \frac{L_{3,M} 20K_M s(1 +5 L_{3,M})\log(M)}{M} + {c_2 \frac{\log(M)}{M}} \right)\nonumber
\end{align}
using the assumption of the theorem $rd \le \left( \frac{M}{\log M} \right)^{\frac{1}{2\beta+1}}$ and setting the value of $K_M$ as 
\begin{equation} \label{eq:K_M_optimal}
    K_M  = \Big  \lfloor \frac{1}{20L_{3,M} s} \left( \frac{M}{\log M} \right)^{\frac{1}{2\beta+1}}  \Big \rfloor \,.
\end{equation}

A relatively straightforward choice of $K_M$ balances the terms in  \eqref{eq:combined_terms_error} and leads to a desired upper bound. We note that a careful choice of $K_M$ may affect the constants and the power of $\log(M)$ in the upper bound.

Putting \eqref{eq:K_M_optimal} back into \eqref{eq:combined_terms_error} and noticing $c_{\calH}^{-1} \leq N$, we get
\begin{align} \label{eq:final_rate}
    \EE\left[\|\widehat{g}_{M}-g_\star\|_{L^2_{\rho}}^2 \right] & \le c_{\calH}^{-1} \bigg[ \frac{2 L^2 {(20L_{3,M}s\bar a)}^{2\beta}}{ (s !)^2 } \left( \frac{\log M}{M} \right)^{\frac{2\beta}{2\beta+1}}
    + (1 +5 L_{3,M})\left( \frac{\log M}{M} \right)^{\frac{2\beta}{2\beta+1}} \nonumber \\
    &\qquad\qquad\qquad\qquad\qquad\qquad\qquad\qquad\quad \ \ + \frac{c_2 \log(M)}{M} \bigg] \nonumber \\
    &\le N \left[ \frac{2 L^2 {(20 sL_{3,M}\bar a)}^{2\beta}}{ (s !)^2 }+(1+5  L_{3,M})+ {c_2} \right] \left( \frac{\log M}{M} \right)^{\frac{2\beta}{2\beta+1}}
\end{align}
when  $M\geq \max(42\cdot L_{1,M}^{2rd }, L_{2,M})$ with $L_{1,M}:=12 r\bar a B_\phi \cdot {80B_M}$, $L_{2,M}:=24 eB_\phi\cdot  {80B_M}$. Recalling that $L_{3,M}=24 \cdot {214 \cdot B^4_M}=24 \cdot 214 \cdot c_1 \cdot \log(M)$, we get from \eqref{eq:final_rate} that
\begin{align*}
	\EE\left[\|\widehat{g}_{M}-g_\star\|_{L^2_{\rho}}^2 \right]  
	&\le N \frac{2 L^2 {(20 24 \cdot 214 \cdot c_1 \cdot s\bar a)}^{2\beta}}{ (s !)^2 } \cdot \frac{[\log(M)]^{\frac{2\beta}{2\beta+1}+8\beta}}{M^{\frac{2\beta}{2\beta+1}}} \\
	&+ N[1+ 5\cdot 24 \cdot 214 + {c_2} ]\cdot  \frac{(\log M)^{\frac{2\beta}{2\beta+1}+4}}{M^{\frac{2\beta}{2\beta+1}}} \\
	&\leq N\left[ C_1^\beta \frac{L^2(s \bar{a})^{2\beta}}{(s !)^2}+C_2 \right]\cdot \frac{[\log(M)]^{\frac{2\beta}{2\beta+1}+4\max(2\beta, 1)}}{M^{\frac{2\beta}{2\beta+1}}}
\end{align*}
for some positive constant $C_1,C_2$. We complete the proof of  Theorem \ref{theorem:upper_bound} with 
$
C_{N,L,\bar{a},\beta,s}=N[C_1^\beta \frac{L^2(s \bar{a})^{2\beta}}{(s !)^2}+C_2]\,.
$
\end{proof}

Finally, to highlight the tradeoff between the parametric and the non-parametric part of the error, we present the following corollary. This corollary is directly derived from  \eqref{eq:combined_terms_error} and \eqref{eq:K_M_optimal} without using the assumption $rd \le \left(\frac{M}{\log M}\right)^{\frac{1}{2\beta+1}}.$ 
\begin{corollary}\label{corr:ub_interm_result}
Consider the estimator  $\widehat{g}_{M}$ defined in \eqref{eq:g_est1} computed on data $M$ i.i.d. observation satisfying Assumptions \ref{assumption:data_dist} and \ref{assumption:noise_subG}.	
    Then, for $\widehat{g}_{M}$ defined in \eqref{eq:g_est1} it holds that   
   \begin{align*}
	\EE\left[\|\widehat{g}_{M}-g_\star\|_{L^2_{\rho}}^2 \right]  
	&\le N\left[ C_1^\beta \frac{L^2(s \bar{a})^{2\beta}}{(s !)^2}+C_2 \right]\cdot \frac{[\log(M)]^{\frac{2\beta}{2\beta+1}+4\max(2\beta, 1)}}{M^{\frac{2\beta}{2\beta+1}}} + C_3 rd \cdot \frac{(\log M)^2}{M}   
\end{align*} 

where $C_1,~C_2$ and $C_3$ are positive constants (maybe take different values than in the Theorem \ref{theorem:upper_bound}).
\end{corollary}

\subsection{Auxiliary lemmas for the upper bound}
Recall that the covering number $\mathcal{N}(\varepsilon, \mathcal{G}, d)$ is defined as the cardinality of the smallest $\varepsilon$-cover of $\mathcal{G}$ with respect to the metric $d$. When $d$ is the Euclidean metric, we omit it from the notation and simply write $\mathcal{N}(\varepsilon, \mathcal{G})$. It is also common to take $d$ to be an $L^p$-norm, either with respect to a probability measure $\rho$ or its empirical counterpart $\rho_M$. In these cases, we write
\[
\mathcal{N}_p(\varepsilon, \mathcal{G}, \rho) := \mathcal{N}(\varepsilon, \mathcal{G}, \|\cdot\|_{L^p_{\rho}}),
\quad
\mathcal{N}_p(\varepsilon, \mathcal{G}, \rho_M) := \mathcal{N}(\varepsilon, \mathcal{G}, \|\cdot\|_{L^p_{\rho_M}}).
\]

We next derive an upper bound for $\calN_1(\varepsilon, \calG_{r,K_M}^s, \rho_M)$, i.e., $p=1$, by covering the matrix component and the functional component separately. Our argument combines the covering number estimates for matrices from \cite{vershynin2018high} with the results of \cite{gyorfi2006distribution} for function classes.

\begin{lemma}\label{lem:covering_upbdd}
    Let $\calG_{r,K_M}^s$ be defined in Definition \ref{def:G_r_KM}. Assume that the sampled data $\{X_i^m\}_{i,m=1}^{N,M}$ are distributed according to Assumption \ref{assumption:data_dist}. Then we have
	\begin{equation}\label{eq:final_cover_bound}
		\calN_1(\varepsilon,\calG_{r,K_M}^s,\rho_M)
		\ \le\ 
		3\cdot \Big(\frac{12 r\bar a B_\phi}{\varepsilon}\Big)^{2rd }\left(\frac{24e B_\phi}{\varepsilon}\right)^{2K_M(s+1)+2}\,.
	\end{equation}
\end{lemma}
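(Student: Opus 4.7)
The plan is to exploit the product structure of the estimator class: every $g_{\phi,A}\in\calG_{r,K_M}^s$ is indexed by a pair $(\phi,A)$, so I will build an $\varepsilon$-net as the Cartesian product of separate nets for the function factor and the matrix factor. The starting point is a Lipschitz-type decomposition: for any $(\phi,A),(\phi',A')$ in the index set and any $(x,y)\in\calC_d\times\calC_d$, the triangle inequality together with the mean-value bound $|\phi(u)-\phi(u')|\le\|\phi'\|_\infty|u-u'|$ gives
\begin{equation*}
|g_{\phi,A}(x,y)-g_{\phi',A'}(x,y)| \le \|\phi'\|_\infty\,\|x\|\|y\|\,\|A-A'\|_{\op} + |\phi(x^\top A' y)-\phi'(x^\top A' y)|.
\end{equation*}
Since $\|x\|_2,\|y\|_2\le 1$ on $\calC_d$ and $\|\phi'\|_\infty\le B_\phi$, integrating against $\rho_M$ and splitting $\varepsilon=\varepsilon/2+\varepsilon/2$ reduces the problem to two independent tasks: (i) cover $\calA_d(r,\bar a)$ in operator norm with precision $\varepsilon/(2B_\phi)$, and (ii) cover $\Phi_{K_M}^s\cap\{\|\phi\|_\infty\le B_\phi\}$ in $L^1$ with precision $\varepsilon/2$, uniformly over the pushforward probability measures on $[-\bar a,\bar a]$.

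For the function factor, piecewise polynomials of degree $s$ on $K_M$ fixed intervals form a linear space of dimension $K_M(s+1)$, hence a VC-subgraph class with pseudo-dimension $V=K_M(s+1)$. Applying Theorem~9.4 of \cite{gyorfi2006distribution}, whose bound $3(2eB_\phi/\varepsilon)^{2V+2}$ holds uniformly across probability measures because it depends only on the pseudo-dimension, and substituting $\varepsilon\to\varepsilon/2$, yields the advertised factor $3\,(24eB_\phi/\varepsilon)^{2K_M(s+1)+2}$.

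For the matrix factor I would use a symmetric SVD factorization: any $A\in\calA_d(r,\bar a)$ admits $A=\tilde U\tilde V^\top$ with $\tilde U,\tilde V\in\R^{d\times r}$ and $\|\tilde U\|_{\op},\|\tilde V\|_{\op}\le\sqrt{\bar a}$, hence $\|\tilde U\|_F,\|\tilde V\|_F\le\sqrt{r\bar a}$. A standard volumetric argument supplies a Frobenius $\varepsilon_1$-net of the radius-$\sqrt{r\bar a}$ ball in $\R^{d\times r}$ of cardinality at most $(3\sqrt{r\bar a}/\varepsilon_1)^{dr}$, and the product rule
\begin{equation*}
\|\tilde U\tilde V^\top - \tilde U'\tilde V'^\top\|_{\op} \le \|\tilde U-\tilde U'\|_F\|\tilde V\|_{\op}+\|\tilde U'\|_{\op}\|\tilde V-\tilde V'\|_F \le \sqrt{\bar a}\bigl(\|\tilde U-\tilde U'\|_F+\|\tilde V-\tilde V'\|_F\bigr)
\end{equation*}
then yields an operator-norm net of $\calA_d(r,\bar a)$ at radius $2\sqrt{\bar a}\,\varepsilon_1$ of size $(3\sqrt{r\bar a}/\varepsilon_1)^{2dr}$. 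Calibrating $2\sqrt{\bar a}\,\varepsilon_1=\varepsilon/(2B_\phi)$ and bounding $\sqrt{r}\le r$ produces a cover of size at most $(12 r\bar a B_\phi/\varepsilon)^{2rd}$. Multiplying the two covering cardinalities gives \eqref{eq:final_cover_bound}.

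The main obstacle I expect is the matrix side: a naive cover of full $d\times d$ matrices gives exponent $d^2$, so one must exploit the low-rank constraint through a two-factor parametrization, and then track the Frobenius-to-operator passage in the product rule while consolidating the multiplicative factors of $\sqrt{\bar a}$, $\sqrt{r}$, and $B_\phi$ into the advertised base $12r\bar a B_\phi$. A secondary check is verifying that Gyorfi's Theorem~9.4 indeed provides a $\mu$-uniform bound, but this follows because the hypothesis depends only on the pseudo-dimension of $\Phi_{K_M}^s$, not on $\mu$.
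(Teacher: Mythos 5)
Your proposal reproduces the paper's argument essentially verbatim: low-rank factorization $A=\tilde U\tilde V^\top$ (the paper writes $A=QK^\top$ with $Q=U_r\Sigma_r^{1/2}$, $K=V_r\Sigma_r^{1/2}$, the same thing), a Frobenius-ball volumetric cover of each factor followed by the operator-norm product rule, the Lipschitz bound $\|\phi'\|_\infty\le B_\phi$ combined with $\|x\|,\|y\|\le 1$ to propagate the matrix cover to the functions, Theorem~9.4--9.5 of \cite{gyorfi2006distribution} for $\Phi_{K_M}^s$, and finally multiplying the two covering cardinalities. The only discrepancy is a constant-tracking slip in the function-factor step: Theorem~9.4 in its basic form gives $3\bigl(\tfrac{2eB}{\varepsilon}\log\tfrac{3eB}{\varepsilon}\bigr)^{V}$ with $V$ the VC-subgraph dimension; the exponent becomes $2V+2$ (not $V$) precisely because the log factor is absorbed, and after that absorption the base is no longer $2eB_\phi/\varepsilon$. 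Plugging $\varepsilon\to\varepsilon/2$ into your stated base $2eB_\phi/\varepsilon$ gives $4eB_\phi/\varepsilon$, not $24eB_\phi/\varepsilon$; the paper reaches $24eB_\phi/\varepsilon$ by first deriving $3\bigl(\tfrac{12e}{\delta}\bigr)^{2K_M(s+1)+2}$ and then setting $\delta=\varepsilon/(2B_\phi)$. This is arithmetic, not a conceptual gap.
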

\begin{proof}
Recall the matrix class defined in Definition \ref{def:lowRM}:
	\[
	\calA_{d}(r,\bar{a}) := \bigl\{ A \in \R^{d\times d} : \operatorname{rank}(A) \leq r,\ \|A\|_{\op}\le \bar a \bigr\}.
	\] 
Write $A=QK^\top$ via the truncated SVD, where 
$Q = U_r\Sigma_r^{1/2}\in \R^{d\times r}$ and $K = V_r\Sigma_r^{1/2}\in \R^{d\times r}$, 
with singular values belonging to $[0,\bar a]$, and $U_r, V_r^\top $ are semi unitary matrices of size $d\times r$, and $\Sigma_r$ is a diagonal matrix of size $r\times r $. Then
	\[
	\|Q\|_F^2 = \Tr(\Sigma_r) \le r\bar a,
	\qquad
	\|K\|_F^2 \le r\bar a.
	\]

Indeed, let $\delta>0$. The $\delta $-covering of the matrix class $\mathcal{Q}_{rd}(\sqrt{r\bar a}):=\{Q\in \R^{d\times r}: \|Q\|_F \le \sqrt{r\bar a}\}$ is equivalent to the $\delta$-covering of $B_{rd}(\sqrt{r\bar{a}})$, a centered ball with radius $\sqrt{r\bar{a}}$ in  $\R^{rd}$ and 
	\cite[Corollary 4.2.11]{vershynin2018high} implies that 
	\begin{align}\label{ineq:cover_bound_vershy}
		n = \calN(\epsilon , \mathcal{Q}_{rd}(\sqrt{r\bar a}),\|\cdot\|_{F})=\calN(\epsilon, B_{rd}(\sqrt{r\bar{a}})) \leq \Big(\frac{3\sqrt{r\bar a}}{\epsilon}\Big)^{rd}\,.
	\end{align}	
Notice that for $A_1=Q_1K_1^\top$ and $A_2=Q_2K_2^\top$ with $\{Q_i\in \mathcal{Q}_{rd}(\sqrt{r\bar a}), K_i\in \mathcal{Q}_{rd}(\sqrt{r\bar a})\}_{i=1}^2$ such that $\|Q_1-Q_2\|_F\leq \delta/{(2\sqrt{\bar a})}$, $\|K_1-K_2\|_F\leq \delta/{(2\sqrt{\bar a})}$, we have
	\begin{align*}
		\|A_1-A_2\|_{\op}=\|Q_1K_1^\top -Q_2K_2^\top\|_{\op}\leq \|Q_1\|_{\op}\|K_1- K_2\|_F+\|Q_1-Q_2\|_F\|K_2\|_{\op}\ \le\ \delta\,.
	\end{align*}	

Moreover, by Assumption \ref{assumption:data_dist} that $X_i,X_j$ lie  within  the unit ball and the assumption that $\phi \in \Phi^{s}_{K_M}$, a degree-$s$ piecewise-polynomial approximation with $K_M$ intervals,  we get: 
    \begin{equation*}
        \big|\phi(\langle x,y\rangle_{A_1})-\phi(\langle x,y\rangle_{A_2})\big|
        	\ \le\ B_\phi \|A_1-A_2\|_{\op} \le B_\phi\delta 
    \end{equation*}
since $\|x\|,\|y\|\le 1$. This proves that if $A_1, A_2$ are within $\delta$ in operator norm, the corresponding functions differ by at most $B_\phi \delta$. Thus, 
    \begin{align} \label{eq:covering_over_sum}
	\calN_1(2B_\phi\, \delta , \calG_{r,K_M}^s, \rho_M)\leq \sum\nolimits_{i,j \ne i }^{n} \calN_1( B_\phi\, \delta , \{\phi( \langle x,y\rangle_{Q_iK_j^\top}): \phi\in \Phi^s_{K_M}\}, \rho_M)\,.
    \end{align}

On the other hand, \cite[Theorem.~9.4--9.5]{gyorfi2006distribution} shows the following bound
\begin{align*}
    \calN_1( B_\phi\, \delta, \Phi^s_{K_M}, \rho_M)\leq 3 \left(\frac{6e(B_{\phi}+1)}{B_\phi\, \delta}\right)^{2K_M(s+1)+2}
    {\le 3 \left(\frac{12 e}{\, \delta }\right)^{2K_M(s+1)+2}}
\end{align*}
for the empirical measure $\rho_M$ in Definition \ref{def:rho} with $\{X^m\in\calC_d\}_{m=1}^M$. 
Putting it back to \eqref{eq:covering_over_sum}, we obtain that
    \begin{align}\label{ineq:cover_bound}
	\calN_1( 2B_\phi\, \delta,\calG_{r,K_M}^s,\rho_M)
		\ \le\ 
		3\cdot \Big(\frac{6 {r\bar a}}{\delta}\Big)^{2rd }\left(\frac{12 e}{\delta}\right)^{2K_M(s+1)+2}\,.
    \end{align}  
Now re-parameterize by $\varepsilon = 2B_\phi \delta$, i.e.\ $\delta = \varepsilon/2B_\phi$, and absorb constants in \eqref{ineq:cover_bound}. This gives our desired estimate in \eqref{eq:final_cover_bound}. 
\end{proof}

\begin{remark}
As a by-product, we show that a $\tfrac{\delta}{2\sqrt{r\bar a}}$-cover for $Q$ and $K$ induces a $\delta$-cover for $\calA_d(r,\bar a)$ in operator norm. 
Taking all pairs $Q_iK_j^\top$ and substituting $\epsilon=\tfrac{\delta}{2\sqrt{r\bar a}}$ in \eqref{ineq:cover_bound_vershy} give that
	\begin{align*}
		\calN(\delta, \calA(r,\bar{a}),\|\cdot\|_{\textup{op}}) \leq\Big(\frac{6 {r\bar a}}{\delta}\Big)^{2rd }\,.
	\end{align*}
\end{remark}

\section{Appendix: lower bound proofs}\label{subsec:lower_bound_proofs}

\begin{lemma}[Continuous density of the bilinear form.]
\label{lemma:U=X'AY}
Let $(X,Y)\in\mathbb{R}^{2d}$ have a joint density $p\in L^1(D)$ with $D\subset \mathbb{R}^{2d}$ being a bounded open set.
Let $A\in\mathbb{R}^{d\times d}$ have rank $r\ge 1$ , and define $U=X^\top A Y$.
Then:
\begin{enumerate}[label=\textup{(\roman*)},leftmargin=8mm]
\item \emph{(Existence)} For every $r\ge 1$, the law of $U$ is absolutely continuous with respect to Lebesgue measure on $\mathbb{R}$ with a density denoted by $p_U$.
\item \emph{(Continuity)} If $r\ge 2$, then $p_U\in C(\mathbb{R})$.
\end{enumerate}
\end{lemma}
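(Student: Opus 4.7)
The plan is to first reduce to a $2r$-dimensional problem via the SVD of $A$, and then apply the coarea formula in both parts. I would begin by writing $A = \Phi \Sigma \Psi^{\top}$ with $\Phi, \Psi \in \mathrm{O}(d)$ and $\Sigma = \mathrm{diag}(\sigma_1, \ldots, \sigma_r, 0, \ldots, 0)$, and orthogonally change variables to $\widetilde X := \Phi^{\top} X$, $\widetilde Y := \Psi^{\top} Y$, so that $U = \widetilde X^{\top} \Sigma \widetilde Y = \sum_{k=1}^{r} \sigma_k \widetilde X_k \widetilde Y_k$ depends only on the leading $r$ entries of each. The pair $(\widetilde X, \widetilde Y)$ has density $\widetilde p(\tilde x, \tilde y) = p(\Phi \tilde x, \Psi \tilde y)$ in $L^{1}$ on a bounded set, and marginalizing the remaining $2(d-r)$ coordinates yields a density $q \in L^{1}(\mathbb{R}^{2r})$ of bounded support, continuous and bounded under Assumption \ref{Assump:Continous}. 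This reduces the problem to studying $U = \psi(z, w) := \sum_{k=1}^{r} \sigma_k z_k w_k$ with $(z, w) \sim q$.

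For part (i), the smooth map $\psi: \mathbb{R}^{2r} \to \mathbb{R}$ has gradient $\nabla \psi = (\sigma_1 w_1, \ldots, \sigma_r w_r, \sigma_1 z_1, \ldots, \sigma_r z_r)$, which vanishes only at the origin, a set of Lebesgue measure zero. The coarea formula then gives
\begin{equation*}
\mathbb{P}(U \in B) \,=\, \int_{\psi^{-1}(B)} q(z,w)\, dz\, dw \,=\, \int_B \left( \int_{\psi^{-1}(u)} \frac{q(z,w)}{|\nabla \psi(z,w)|} \, d\mathcal{H}^{2r-1} \right) du
\end{equation*}
for every Borel $B \subset \mathbb{R}$. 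Hence $|B|=0$ implies $\mathbb{P}(U \in B) = 0$, so the law of $U$ is absolutely continuous with density equal to the inner integral.

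For part (ii) with $r \geq 2$, the main obstacle is that $q/|\nabla \psi|$ has an apparent singularity where $|\nabla \psi|$ is small, so no single global parameterization of $\{\psi = u\}$ will admit an integrable dominating function. My plan is to partition $\mathbb{R}^{2r}$ into the $2r$ regions $R_k^{z} := \{(z,w): |\sigma_k z_k| = \max_j (|\sigma_j z_j|, |\sigma_j w_j|)\}$ and the analogous $R_k^{w}$, for $k = 1, \ldots, r$ (with arbitrary tie-breaking). On $R_k^z$ the level set $\{\psi = u\}$ is the graph $w_k = (u - \sum_{l \neq k} \sigma_l z_l w_l)/(\sigma_k z_k)$, and a direct computation (using $\sqrt{1 + |\nabla f|^2} = |\nabla \psi|/|\sigma_k z_k|$ for this graph) yields
\begin{equation*}
p_U^{R_k^z}(u) \,=\, \int_{R_k^z} \frac{q\!\left(z, w_{-k}, (u - \textstyle\sum_{l \neq k} \sigma_l z_l w_l)/(\sigma_k z_k)\right)}{|\sigma_k z_k|} \, dz \, dw_{-k}.
\end{equation*}
The key geometric observation is that on $R_k^z$ every other $|z_j|, |w_j|$ is bounded by a constant multiple of $|z_k|$, so for each fixed $z_k$ the remaining $2r-2$ free coordinates lie in a box of volume $\lesssim |z_k|^{2r-2}$. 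The integrand is then dominated by $\|q\|_\infty / |\sigma_k z_k|$, and the combined factor $|z_k|^{2r-2}/|z_k| = |z_k|^{2r-3}$ is Lebesgue-integrable on $[-R, R]$ precisely when $r \geq 2$. Dominated convergence as $u' \to u$, using continuity of $q$ inherited from Assumption \ref{Assump:Continous}, then gives continuity of each $p_U^{R_k^z}(\cdot)$, and summing over the $2r$ regions yields $p_U \in C(\mathbb{R})$.

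The hard part is the domination step, and the exponent $2r-3$ shows the hypothesis $r \geq 2$ is sharp: for $r = 1$ the factor $|z_1|^{-1}$ has no volume companion to absorb it, and the textbook density of the product of two independent uniforms on $[0,1]$ indeed diverges logarithmically at $0$. The localized partition is what converts the apparently divergent bound $\int \|q\|_\infty / |\sigma_k z_k|$ into an integrable one precisely when $r \geq 2$.
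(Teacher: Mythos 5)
Your reduction to a canonical quadratic form on $\mathbb{R}^{2r}$ via the SVD, and the coarea-formula argument for existence in part (i), mirror the paper's proof essentially verbatim. For the continuity statement (ii), however, you take a genuinely different route. The paper works on the Fourier side: it bounds the characteristic function $\xi_U(t)=\int e^{it\Phi(z)}q(z)\,dz$ by the stationary-phase estimate $|\xi_U(t)|\lesssim (1+|t|)^{-r}$ for the non-degenerate quadratic phase $\Phi$, so that for $r\ge 2$ the characteristic function is integrable and Fourier inversion delivers a bounded continuous density. You instead argue directly on the physical side: after the coarea reduction you partition $\mathbb{R}^{2r}$ into the $2r$ cones $R_k^z,R_k^w$ where one entry of $\nabla\Phi$ dominates, write the level set $\{\Phi=u\}$ as a graph over the remaining $2r-1$ coordinates in each cone, and exhibit a $u$-uniform integrable majorant for the slice integrand. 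The exponent $2r-3$ you extract makes the role of $r\ge 2$ completely explicit, and it cleanly explains the logarithmic blow-up at $r=1$.

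Each approach buys something. The Fourier route is shorter and modular, delegating the hard step to a standard oscillatory-integral estimate; its extension from $q\in C_c^\infty$ to general $q\in L^1$ is dispatched by a brief density remark in the paper. Your argument is more elementary and self-contained, and it localizes exactly where integrability fails, namely at the $1/|\nabla\Phi|$ singularity near the origin. The price is that you invoke boundedness and continuity of $q$, which you justify via Assumption \ref{Assump:Continous}; note this is somewhat stronger than the lemma's stated hypothesis $p\in L^1(D)$, although in the paper's actual use of the lemma Assumption \ref{Assump:Continous} is in force, so the gap is harmless in context. Two technical points worth tightening if you write this up: first, in the dominated-convergence step the $u$-dependent domain (the projection of $R_k^z\cap\{\Phi=u\}$ onto the base coordinates) changes with $u$, so you should observe that the symmetric difference as $u'\to u$ shrinks to a Lebesgue-null set before passing the limit under the integral; second, the graph parametrization over $R_k^z$ requires $z_k\neq 0$, which holds almost everywhere on that cone, and this should be noted explicitly.
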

    Note that $r\geq 2$ is \emph{sharp} for $p_U$ to be continuous: for $r=1$, continuity at $0$ may not hold: if $X,Y$ are independent standard Gaussian in $\mathbb{R}$ and $A=I$ , then $U=XY$ has density $p_U(u)=\frac{1}{\pi}K_0(|u|)$, where $K_0(x)\sim -\log x$ as $x\downarrow 0$, so $p_U$ is singular at $0$. 

\begin{proof} The proof consists of three steps: reduction to a canonical quadratic form on $\R^{2r}$, existence of the density, and continuity. 

\textbf{Step 1. Reduction to the canonical quadratic form on $\mathbb{R}^{2r}$.} 
Let $A=W\Sigma V^\top$ be a singular value decomposition with $\Sigma=\mathrm{diag}(\sigma_1,\dots,\sigma_r,0,\dots,0)$, where $\sigma_i>0$ and $W,V\in \R^{d\times d}$ are orthonormal. Set $\alpha:=W^\top X$ , $\beta:=V^\top Y$. Orthonormal changes preserve absolute continuity, so $(\alpha,\beta)$ has a joint density $\widetilde p(\alpha,\beta) = p(W\alpha, V\beta)$ with support $\widetilde{D} = \{(\alpha,\beta)= (W^{-1}x,V^{-1}y): (x,y)\in D\}$, which is a bounded subset in $\R^{2d}$. Split $\alpha=(\alpha^{(r)},\alpha^\perp)$ and $\beta=(\beta^{(r)},\beta^\perp)$ , where the superscript $(r)$ denotes the first $r$ coordinates. Then
\begin{equation*}
U \;=\; X^\top A Y \;=\; \sum_{i=1}^r \sigma_i\,\alpha^{(r)}_i\,\beta^{(r)}_i.
\end{equation*}
Integrating out $(\alpha^\perp,\beta^\perp)$ yields a marginal density $q\in L^1(\mathbb{R}^{2r})$ for $Z:=(\alpha^{(r)},\beta^{(r)})$ and it has a bounded support, which we denote by $\Omega$.  
Thus it suffices to work in $ \mathbb{R}^{2r}$ with
\begin{equation*}
\Phi(\alpha,\beta):=\sum_{i=1}^r \sigma_i \alpha_i\beta_i,
\qquad U=\Phi(Z),\qquad Z\sim q\in L^1(\Omega).
\end{equation*}

\textbf{Step 2. Existence by the coarea formula.}  
For any bounded measurable test function $\varphi:\mathbb{R}\to\mathbb{R}$ ,
\begin{equation}\label{eq:pushforward0}
\mathbb{E}[\varphi(U)] \;=\; \int_{\Omega} \varphi(\Phi(z))\, q(z)\,dz.
\end{equation}
Note that $\Phi$ has gradient $\nabla \Phi(\alpha,\beta) =(\sigma_1\beta_1,\ldots,\sigma_r\beta_r,\sigma_1\alpha_1,\ldots,\sigma_r\alpha_r) $, which is Lipschitz continuous on $\Omega$.  
Applying the Coarea formula (i.e., for any $f:\Omega\subset \mathbb{R}^{n}\to\mathbb{R}$ Lipschitz and $g\in L^1_{\mathrm{loc}}(\mathbb{R}^{n})$,  $\int_{\mathbb{R}^n} g(z)\,|\nabla f(z)|\,dz \;=\; \int_{\mathbb{R}} \left(\int_{f^{-1}(u)} g(z)\,d\mathcal{H}^{n-1}(z)\right) du$, where $\mathcal{H}^{n-1}(z)$ denotes the Hausdorff measure, see, e.g., \cite{evans2018measure}) to $f=\Phi$ with
$ 
g(z)=q(z)\,\varphi(\Phi(z)) / |\nabla\Phi(z)|
$ 
gives 
\begin{equation*}
\int_{\Omega} \varphi(\Phi(z))\, q(z)\,dz
= \int_{\mathbb{R}}\left(\int_{\Phi^{-1}(u)} \frac{q(z)}{|\nabla\Phi(z)|}\,d\mathcal{H}^{2r-1}(z)\right)\varphi(u)\,du.
\end{equation*}
Hence, $p_U(u)=\int_{\Phi^{-1}(u)} \frac{q(z)}{|\nabla\Phi(z)|}\,d\mathcal{H}^{2r-1}(z)$ for $u\neq 0$.

Note that under the change of variables $z=\sqrt{|u|}\,w$ , the Hausdorff surface measure scales by $|u|^{(2r-1)/2}$ and $|\nabla\Phi|$ by $|u|^{1/2}$. Then, for $u\neq 0$, the above equation can be written as
\begin{equation}\label{eq:coarea-sliced0}
\int_{\Omega} \varphi(\Phi(z))\, q(z)\,dz
= \int_{\mathbb{R}} |u|^{r-1} \left(\int_{\Phi(w)= \mathrm{sign}(u)} \frac{q(\sqrt{|u|}\,w)}{|\nabla\Phi(w)|}\,d\mathcal{H}^{2r-1}(w)\right)\varphi(u)\,du.
\end{equation}
Comparing \eqref{eq:pushforward0} and \eqref{eq:coarea-sliced0}, the push-forward measure is absolutely continuous with density
\begin{equation}\label{eq:density-formula} 
p_U(u)=|u|^{r-1}\int_{\Phi(w)=\mathrm{sign}(u)} \frac{q(\sqrt{|u|}\,w)}{|\nabla\Phi(w)|}\,d\mathcal{H}^{2r-1}(w) 
\end{equation}
for all $ u\neq 0$. 
This proves (i) for all $r\ge 1$. 

\textbf{Step 3. Continuity.} Let $\xi_U(t)=\mathbb{E}[e^{itU}]=\int_{\mathbb{R}^{2r}} e^{it\Phi(z)}\,q(z)\,dz$ be the characteristic function. The phase $t\Phi(z)$ is a non-degenerate quadratic form with constant Hessian
$ 
H=t\begin{pmatrix}0&\Sigma\\ \Sigma&0\end{pmatrix}
$ 
(of full rank $2r$ ). By the standard stationary phase bound for quadratic phases (see, e.g., \cite[Theorem 1.1.4]{sogge2017})
\begin{equation*}
|\xi_U(t)| \;\le\; C\,(1+|t|)^{-r},
\end{equation*}
with $C$ depending on $q$ (e.g.\ if $q\in C_c^\infty$ , then $C$ depends on a finite number of derivatives; and it  extends to general $q\in L^1$ since $C_c^\infty$ is dense in $L^1$). 
Hence, if $r\ge 2$ , then $\xi_U\in L^1(\mathbb{R})$ and Fourier inversion yields a bounded continuous density
$ 
p_U(u)=\frac{1}{2\pi}\int_{\mathbb{R}} e^{-itu}\xi_U(t)\,dt.
$ 
\end{proof}

Next, we provide the proof of Lemma \ref{lem:bound_full_interaction}.

\begin{proof}[Proof of Lemma \ref{lem:bound_full_interaction}.]
Consider
\[
U_{ij} = X_i^{\top}A_{\star }X_j, 
\qquad 
V_{ij} = X_i^{\top}\hat A X_j,
\]
so that $\hat g(X_i,X_j)=\hat\phi(V_{ij})$ and $g^{\star }(X_i,X_j)=\phi^{\star }(U_{ij})$. Recall that $p_{U_{ij}}$ is the density of $U_{ij}$ and $p_U= \frac{1}{N(N-1)}\sum_{i,j: i\neq j} p_{U_{ij}}$. Also, recall that the following functions are defined in \eqref{eq:psi2hat}: 
\begin{equation*}
  \widehat\psi_{ij}(u) := \E \big[\widehat\phi(V_{ij})\big| U_{ij}=u\big], \quad \hat{\psi}(u) :=  \sum_{i=1}^N\sum_{j=1,j \ne i}^N\frac{p_{U_{ij}(u)}}{N(N-1)p_{U}(u)} \hat{\psi}_{ij(u)}.
\end{equation*}
Since $\sum_{i=1}^N\sum_{j=1,j \ne i}^N\frac{p_{U_{ij}}(u)}{N(N-1)p_U(u)}=1$, we have, by applying Jensen's inequality, 
\begin{align}
    \bigl|\hat\psi(u)-\phi_\star(u)\bigr|^2 
    & = \bigl| \sum_{i=1}^N \sum_{j=1,j \ne i}^N \frac{p_{U_{ij}}(u)}{N(N-1)p_U(u)}\widehat\psi_{ij}(u)-\phi_\star(u)\bigr|^2  \nonumber \\
    & \leq \sum_{i=1}^N \sum_{j=1,j \ne i}^N \frac{p_{U_{ij}}(u)}{N(N-1) p_U(u)}\, \bigl|\widehat\psi_{ij}(u)-\phi_\star(u)\bigr|^2. \label{ineq:Lem4.1_1}
\end{align} 

Also, by applying Jensen's inequality to the conditional expectation, we have 
\begin{align*}
\E\bigl[\bigl|\widehat\phi(V_{ij})-\phi_\star(U_{ij})\bigr|^2\bigr] 
&= \E[\E\bigl[\bigl|\widehat\phi(V_{ij})-\phi_\star(U_{ij})\bigr|^2\vert U_{ij}\bigr]] \\
&\ge  \E[\bigl|\E\bigl[\widehat\phi(V_{ij})-\phi_\star(U_{ij})\vert U_{ij}\bigr]\bigr|^2] =  \int_{-\bar a}^{\bar a} \bigl|\widehat\psi_{ij }(u)-\phi_\star (u)\bigr|^2  p_{U_{ij}}(u) du.
\end{align*}
Averaging over the pairs as in \eqref{ineq:Lem4.1_1},  we have 
\begin{align*}\label{eq:lemma-int-inequality}
\frac{1}{N(N-1)} &\sum_{i=1}^N\sum_{j=1,j \ne i}^N \E\bigl[\bigl|\widehat\phi(V_{ij})-\phi_\star(U_{ij})\bigr|^2\bigr]  \\
\ge &  \int_{-\bar a}^{\bar a} \frac{1}{N(N-1)}  \sum_{i=1}^N\sum_{j=1,j \ne i}^N  \bigl|\widehat\psi_{ij }(u)-\phi_\star (u)\bigr|^2  \frac{p_{U_{ij}}(u)}{p_U(u)} p_U(u) du \\
\ge & \int_{-\bar a}^{\bar a}\! \bigl|\hat{\psi}(u)-\phi_\star(u)\bigr|^2\, p_U(u)\,du, 
\end{align*}
which is the desired inequality.  
\end{proof}

\begin{proof}[Proof of Lemma \ref{lemma:construction} ]		%
		We construct $\bar K=\lceil c_{0,N} M^{\frac{1}{2\beta+1}}\rceil$ disjoint equidistant intervals. 
		\begin{equation}\label{eq:h_M}
			\{\Delta_\ell = (r_\ell-h_M, r_\ell+h_M)\}_{\ell=1}^{\bar K}, \quad \text{with } h_{M}=\frac{L_0}{8n_0\bar K}, 
		\end{equation}
		where $\{r_\ell\}$, $n_0$ and $L_0$ are specific values that will be determined below. 
		We will define the intervals by separating into two cases: one where the density of $p_U$ is bounded below by $\underline{a}_0>0$ and one where it is not. \\
		If $p_U(u)\ge \underline{a}_0>0$, 
		we can simply use the uniform partition of $\text{supp}(p_U)$ to obtain the desired $\{\Delta_\ell\}$. That is, we set $n_0=1$, $L_0= 4$, and $r_\ell=-\bar{a} + (2 \ell -1) h_M$.
		If $p_U$ is not bounded away from zero, we shall build the partition based on its continuity.
		Since $p_U$ is continuous on $[-\bar{a},\bar{a}]$,  the constant $a_0 = \sup_{x\in [-\bar{a},\bar{a}]} p_U(x)$ exists, now consider $\underline{a}_0 < a_0 \land 1$. We can construct the $\bar{K}$ intervals described in \eqref{eq:h_M} which satisfy the following $\bigcup_\ell \Delta_\ell \subset A_0:=\{u\in [-\bar{a},\bar{a}]:p_U(u)>\underline{a}_0\}$. 
        
		Let $L_{0}:=\frac{1-2  \underline a_{0}}{  a_{0}-\underline a_{0}  }$. Since for all $u\in A_0 $, $p_U(u) \leq a_0$ and for all $u\in A^c_0$,  $p_U(u) \leq \underline{a_0}$, together with the fact that $1  =  \int_{A_0} p_U(u) du + \int_{A^c_0} p_U(u) du$, we get:
		 \begin{equation}
			1 \leq a_0 \text{Leb}(A_0) + \underline a_{0}  (2\bar{a}-\text{Leb}(A_0))\Rightarrow L_0 \leq \text{Leb}(A_0)\leq 2\bar{a}. 
		 \end{equation}
		 Also, note that the set $A_0$ is open by continuity of $p_U$. Thus, there exist disjoint intervals $(a_j,b_j)$ such that $A_0= \bigcup_{j=1}^{\infty}(a_j,b_j)$. Without loss of generality, we assume that these intervals are descendingly ordered according to their length $b_j-a_j$. Let 
		\begin{equation} \label{eq:n_0}
		n_0 = \min\{n:\sum_{j=1}^{n}(b_j-a_j) > \frac{L_0}{2} \}.
		\end{equation}

		One can see that $n_0> 1$. Now, we construct the first $n_1$ disjoint intervals $\{\Delta_\ell = (r_\ell-h_M, r_\ell+h_M)\}_{\ell =1}^{n_1}\subset (a_1,b_1)$ such that $r_\ell = a_1+\ell h_M$ and $n_1= \lfloor\frac{b_1-a_1}{2h_M}\rfloor$. If $n_1\geq \bar K$, we stop. Otherwise, we construct additional disjoint intervals $\{\Delta_\ell = (r_\ell-h_M, r_\ell+h_M)\}_{\ell =n_1+1}^{n_1+n_2}\subset (a_2,b_2)$ similarly, and continue to $(a_j,b_j)$ until obtaining $\bar K$ intervals $\{\Delta_\ell\}$. \\
		To show that we will at least obtain $\bar K$ such intervals, we show that $K_\star \geq \bar K$, where $K_\star $ is the total number of intervals $\{\Delta_\ell\}_{\ell=1}^{K_\star }$. 
		Since the Lebesgue measure of $(a_j,b_j)\backslash \bigcup_{\ell=1}^{K_\star } \Delta_\ell$ is less than $2h_M$ for each $j$, the Lebesgue measure of the uncovered parts $\bigcup_{j=1}^{n_0}(a_j,b_j) \backslash \big( \bigcup_{\ell=1}^{K_\star } \Delta_\ell \big)$ is at most $2n_0h_M $. \\
		Thus, by \eqref{eq:n_0} the intervals $\{\Delta_\ell\}_{\ell=1}^{K_\star }$ must have a total length no less than $\frac{L_0}{2} -2n_0h_M $. And since each of them is in length of $2h_M$ the total number must satisfy:
		 \[
		 K_\star \geq (\frac{L_0}{2} -2n_0h_M  ) /(2h_M)   
		 \]
		 and plugging in the definition of $h_M$ from \eqref{eq:h_M} we get:
		 \[
			K_\star \geq 2 \bar  K n_0- n_0 \geq \bar K .   
		 \]
		
		Now we construct hypothesis functions satisfying Conditions \ref{Cond:a}--\ref{Cond:c}. We first define $2^{\bar K}$ functions, from which we will select a subset of 2s-separated hypothesis functions,   
		\[ \phi_{\omega}(u)=\sum_{\ell=1}^{\bar K} \omega_\ell \psi_{\ell,M}(u), \quad \omega=(\omega_1,\cdots,\omega_{\bar K})\in \{0,1\}^{\bar K}, 
		\] 
		where the basis functions are  
		\begin{equation} \label{eq:bump_def}
			\psi_{\ell,M}(u):=L  h_{M}^{\beta}  
			\psi\Bigl(\tfrac{u-r_{\ell}}{h_{M}}\Bigr),
			\quad
			u\in[-\bar{a},\bar{a}]
		\end{equation}
		with $\psi(u)=e^{-\frac{1}{1-(2u)^2}}\textbf{1}_{|u|\leq 1/2}$.  Note that the support of $\psi_{\ell,M}(u)$ is $\Delta_\ell$, and $\int_{\Delta_\ell}|\psi_{\ell,M}(u)|^2 du = L ^2h_M^{2\beta+1}\|\psi\|^2_2$.  
		By definition, these hypothesis functions satisfy  Condition \ref{Cond:a}, i.e., they are Holder continuous and 
        $$
        \|\psi_{\ell,M}\|_\infty\leq L h_M^\beta\leq L \|p_U\|_\infty^{-\beta} \leq L (2\bar{a})^{-\beta}\leq  B_\phi 
        $$ since $h_M = \frac{L_0}{8n_0\bar K} < L_0 \leq \frac{1}{a_0} $ with $a_0=\|p_U\|_\infty$ and $\|p_U\|_\infty\leq \frac{1}{2\bar{a}}$. 

		Then, denoting $\phi_{k}(x)=\phi_{\omega^{(k)}}(x)$, we proceed to verify Conditions \ref{Cond:b}--\ref{Cond:c}. 
		Next, we select a subset of  2$s_{N,M}$-separated functions $\{\phi_{k,M}:=\phi_{\omega^{(k)}}\}_{k=1}^K$ satisfying Condition \ref{Cond:b}, i.e., $ \|\phi_{\omega^{(k)}}-\phi_{\omega^{(k')}}\|_{L^2_{p_U}} \geq 2s_{N,M}$ for any $k\neq k'\in \{1,\ldots,K\}$. Here $s_{N,M}= C_1 c_{0,N}^{-\beta} M^{-\frac{\beta}{2\beta+1}}$ with $C_1$ being a positive constant to be determined below. Since $\Delta_\ell = \text{supp}(\psi_{\ell,M})\subseteq \Delta_\ell$ are disjoint, we have 
		\begin{align*}
			\|\phi_{\omega}-\phi_{\omega'}\|_{L^2_{p_U}} &= \bigg(\int_\R \bigg|\sum_{\ell=1}^{\bar K} (\omega_\ell-\omega'_\ell) \psi_{\ell,M}(u) \bigg|^2 p_U(u)du\bigg)^{\frac 12} \\
				&= \bigg(\sum_{\ell=1}^{\bar K} (\omega_\ell-\omega'_\ell)^2 \int_{\Delta_\ell}|\psi_{\ell,M}(u)|^2 p_U(u)du\bigg)^{\frac 12}  .
		\end{align*}
		Since $p_U(u)\geq \underline{a}_0$ over each $\Delta_\ell$, we have 
		\[ 
		\int_{\Delta_\ell}|\psi_{\ell,M}(u)|^2 p_U(u)du\geq \underline{a}_0  \int_{\Delta_\ell}|\psi_{\ell,M}(u)|^2 du = \underline{a}_0 L^2 h_M^{2\beta+1}\|\psi\|_2^2.
		\] 
		Applying the Varshamov-Gilbert bound \citep[Lemma 2.9]{tsybakov2008introduction}, one can obtain a subset $\{\omega^{(k)}\}_{k=1}^K$ with $K\geq 2^{\bar K/8}$ such that 
		$ \sum_\ell^{\bar{K}} (\omega_\ell^{(k)}- \omega_\ell^{(k')})^2 \geq \frac{\bar{K}}{8} $ for any $k\neq k'\in \{1,\ldots,K\}$. Thus, 
\begin{align*}
    \|\phi_{\omega}-\phi_{\omega'}\|_{L^{2}_{p_{U}}} 
    &\ge \sqrt{\underline a_{0}} L \|\psi\|_{2} \sqrt{\frac{\bar K}{8}} \left(\frac{L_0}{8n_0\bar K}\right)^{\beta+1/2} \\
    &= \sqrt{\underline a_{0}} L \|\psi\|_{2} \frac{\bar K^{1/2}}{2\sqrt{2}} \left(\frac{L_0}{8n_0}\right)^{\beta+1/2} \bar K^{-(\beta+1/2)} \\
    &= \left( \frac{\sqrt{\underline a_{0}} L \|\psi\|_{2}}{2\sqrt{2}} \left(\frac{L_0}{8n_0}\right)^{\beta+1/2} \right) \bar K^{-\beta} =  s_{N,M} 
\end{align*}
where $ s_{N,M} = C_{1}  c_{0,N}^{-\beta}   M^{-\frac{\beta}{2\beta+1}}$ with
\begin{equation*}
 C_{1} := \frac{\sqrt{\underline a_{0}} L \|\psi\|_{2}}{4\sqrt{2}} \left(\frac{L_0}{8n_0}\right)^{\beta+1/2}.
\end{equation*}

To verify condition~\ref{Cond:c} for each fixed dataset 
		$\{X^m\}_{m=1}^M$, we first compute the Kullback-Leibler (KL) divergence. 
		Define $u_{ij}^{m}:=(X_{i}^{m})^{\top}A X_{j}^{m}$. Then for each $m$,
		\[
		  R_{\phi}[X^m]_i 
		  =
		  \frac{1}{N-1} \sum_{j\ne i } \phi\bigl(u_{ij}^m\bigr).
		\]
		Under the hypothesis $\phi_{k,M}$, the density of the outputs 
		$\{Y^m\}_{m=1}^M$ is 
		\[
		  p_k\bigl(y^1,\dots,y^M\bigr)
		  =
		  \prod_{m=1}^M 
			p_{\eta}\Bigl(y^m - R_{\phi_{k,M}}[X^m]\Bigr),
		\]
		where $y^m\in \mathbb{R}^d$ represents the observed 
		output $Y^m$. By definition of KL divergence and the i.i.d.\ noise assumption,
		\[
		  \KL\bigl(\bar{\P}_k,\bar{\P}_0\bigr)
		  =
		  \int \cdots \int 
			\log \prod_{m=1}^M \frac{
			  p_{\eta}\bigl(y^m\bigr)
			}{
			  p_{\eta}\Bigl(y^m + R_{\phi_{k,M}}[X^m]\Bigr)
			}
		    \prod_{m=1}^M p_\eta(y^m)  \mathrm{d}y^m.
		\]
		This simplifies to
		\[
		  \KL\bigl(\bar{\P}_k,\bar{\P}_0\bigr)
		  =
		  \sum_{m=1}^M 
			\int_{\mathbb{R}^d}
			  \log \Bigl[
				\tfrac{p_{\eta}(y^m)}{
					   p_{\eta} \bigl(y^m + R_{\phi_{k,M}}[X^m]\bigr)}
			  \Bigr]
			    p_{\eta}(y^m)  \mathrm{d}y^m.
		\]
		Finally, by the noise smoothness assumption \ref{assumption:noise}, for each \(m\),
		\[
		  \int p_{\eta}(y)  \log \Bigl[
			  \tfrac{p_{\eta}(y)}{p_{\eta}(y + v)}
		  \Bigr]  \mathrm{d}y 
		  \le
		  c_{\eta}  \|v\|^2,
		\]
		where \(v = R_{\phi_{k,M}}[X^m]\). Summing over \(m=1,\dots,M\) yields
		\begin{equation} \label{eq:KL_inequality}
		  \KL\bigl(\P_k,\P_0\bigr)
		  \le
		  c_{\eta}
		  \sum_{m=1}^M 
			\bigl\|R_{\phi_{k,M}}[X^m]\bigr\|^2,
		\end{equation}
		Employing Jensen's inequality, we have
		\begin{align}  \label{eq:R_inequality}
		\|R_{\phi_{k,M}}[X^m]\|^2
		=
		\sum_{i=1}^N 
			\Bigl(\tfrac{1}{N-1}\sum_{j \neq i} \phi_{k,M}(u_{ij}^m)\Bigr)^2
		\le
		\sum_{i=1}^N 
			\tfrac{1}{N-1}\sum_{j\neq i} \bigl|\phi_{k,M}(u_{ij}^m)\bigr|^2
		=
		\tfrac{1}{N-1}\sum_{i=1}^N \sum_{j\neq i }\bigl|\phi_{k,M}(u_{ij}^m)\bigr|^2.
		\end{align}
		Recalling that $\phi_{k,M}(u_{ij}^m)=\sum_{\ell=1}^{\bar K} \omega^{(k)}_\ell \psi_{\ell,M}(u_{ij}^m)$, where $\text{supp} (\psi_{\ell,M})\subseteq \Delta_\ell$ are disjoint and $| \psi_{\ell,M}(u_{ij}^m) | =L h_M^\beta \psi\bigg(\frac{u_{ij}^m-r_\ell}{h_M} \bigg) \leq L h_M^\beta \|\psi\|_{\infty}\mathbf{1}_{\{u_{ij}^m \in \Delta _\ell\}} $, we have 
        \begin{equation} \label{eq:phi_inequality}
            |\phi_{k,M}(u_{ij}^m)|^2 = \sum_{\ell=1}^{\bar K} \omega^{(k)}_\ell \big|\psi_{\ell,M}(u_{ij}^m)\big|^2
            \leq L^2 h_M^{2\beta} \|\psi\|_{\infty}^2 \sum_{\ell=1}^{\bar K}  \mathbf{1}_{\{u_{ij}^m \in \Delta _\ell\}}, 
        \end{equation}
		where we have used the fact that $0\leq  \omega^{(k)}_\ell\leq 1$. 
        By plugging in both \eqref{eq:R_inequality} and \eqref{eq:phi_inequality} into \eqref{eq:KL_inequality}, we obtain
        \begin{align*}
            \KL(\bar{\P}_k,\bar{\P}_0) 
            &\le \frac{c_\eta}{N-1} \sum_{m=1}^M \sum_{i=1}^N\sum_{j\neq i} \left( L^2 h_M^{2\beta} \|\psi\|_{\infty}^2 \sum_{\ell=1}^{\bar K} \mathbf{1}_{\{u_{ij}^m \in \Delta _\ell\}} \right) \\
            &\le \frac{c_\eta L^2 \|\psi\|_{\infty}^2 h_M^{2\beta}}{N-1} \sum_{i,j,m} \left( \sum_{\ell=1}^{\bar K} \mathbf{1}_{\{u_{ij}^m \in \Delta _\ell\}} \right).
        \end{align*}
        Since the intervals $\{\Delta_\ell\}$ are disjoint, the inner sum is at most 1. The total sum over $i,j,m$ is therefore bounded by $N^2 M$, which gives:
        \[
        \KL(\bar{\P}_k,\bar{\P}_0) \le c_\eta L^2 \|\psi\|_{\infty}^2 N M h_M^{2\beta}.
        \]
		Hence, by assigning $h_{M}=\frac{L_0}{8n_0\bar K} $ from \eqref{eq:h_M} and $\bar K=\lceil c_{0,N} M^{\frac{1}{2\beta+1}}\rceil$, we obtain  
        \begin{align*}
            \frac 1{K} \sum_{k=1}^{K} \KL(\bar{\P}_k,\bar{\P}_0)
            &\le \left(c_\eta L^2 \|\psi\|^2_{\infty} \left(\frac{L_0}{8n_0}\right)^{2\beta} \right) N \left(\frac{\bar K}{c_{0,N}}\right)^{2\beta+1} \bar K^{-2\beta} \\
            &= \left( \frac{c_\eta L^2 \|\psi\|_\infty^2 N}{c_{0,N}^{2\beta+1}} \left(\frac{L_0}{8n_0}\right)^{2\beta} \right) \bar K\leq \alpha \log K  
        \end{align*}
        with $\alpha =\left( \frac{c_\eta L^2 \|\psi\|_\infty^2 N}{c_{0,N}^{2\beta+1}} \left(\frac{L_0}{8n_0}\right)^{2\beta} \right) \frac{8}{\log 2} $ since $K \ge 2^{\bar K/8}$. Thus, for condition \ref{Cond:c} to hold, i.e., $\alpha<1/8$,  we need 
         \[
         c_{0,N}^{2\beta+1} \ge 64 c_\eta L^2 \|\psi\|_\infty^2 N \left(\frac{L_0}{8n_0}\right)^{2\beta}.
        \]
        Following $c_{0,N} = C_0N^{\frac{1}{2\beta+1}}$, it suffices to set $C_0$ to be 
          \begin{equation*}
            C_0 := (32 c_\eta L^2 \|\psi\|_\infty^2\left(\frac{L_0}{8n_0}\right)^{2\beta})^{\frac{1}{2\beta+1}}. 
            \end{equation*}

    \end{proof}    

To prove the lower bound minimax rate, we will use the following lower bound for hypothesis test error, see e.g.,  Proposition 2.3 \cite{tsybakov2008introduction} or Lemma 4.3 in \cite{wang2025optimalminimaxratelearning}.  
\begin{lemma}[Lower bound for hypothesis test error ]\label{lemma:tsybakov} 
	Let $\Theta= \{\theta_k\}_{k=0}^K$ with $K\geq 2$ be a set of $2s$-separated hypotheses, i.e., $d(\theta_k,\theta_{k'})\geq 2s>0$ for all $ 0\leq k<k'\leq K$, for a given metric $d$ on $\Theta$.  Denote $\P_{k}=\P_{\theta_k}$ and suppose they satisfy $\P_{k}\ll \P_{0}$ for each $ k\geq 1$ and 
			\begin{equation}\label{Ineq:Kullback}
				\frac{1}{K+1} \sum_{k=1}^K \KL(\P_{k}, \P_{0}) \leq \alpha \log(K) , \quad \text{with } 0<\alpha<1/8 .
			\end{equation}
	Then, the average probability of the hypothesis testing error has a lower bound:  
	   \begin{equation}\label{Main:Lower_bd}
		   \inf_{k_{\textup{test}}} \frac{1}{K+1} \sum_{k=0}^K \P_{k}\big(k_{\textup{test}} \neq k  \big) \geq \frac{\log(K+1)-\log(2)}{\log(K)}-\alpha  , 
	   \end{equation}
	   where $\inf_{k_{\textup{test}}}$ denotes the infimum over all tests.  
	\end{lemma}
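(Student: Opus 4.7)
The plan is to apply Fano's inequality under a uniform prior on the hypothesis set and then control the resulting mutual information by the variational Kullback-Leibler bound with $\P_0$ as the reference measure. This is the classical two-step scheme underlying Proposition~2.3 of \cite{tsybakov2008introduction}, and I would follow it essentially verbatim, with care taken about the normalization $\tfrac{1}{K+1}$ appearing in the statement.

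First, I would place a uniform prior on $\Theta$: let $\theta$ be drawn uniformly from $\{\theta_0,\ldots,\theta_K\}$, and let $Y$ be the observation whose conditional law given $\theta=\theta_k$ is $\P_k$. For any test $k_{\textup{test}}=k_{\textup{test}}(Y)$, the quantity $\tfrac{1}{K+1}\sum_{k=0}^K \P_k(k_{\textup{test}}\neq k)$ is exactly the Bayes error probability $\P(k_{\textup{test}}\neq\theta)$ under this uniform prior. Fano's inequality for an $(K+1)$-ary hypothesis test then yields
\begin{equation*}
\P(k_{\textup{test}}\neq\theta)\,\log K + \log 2 \;\ge\; H(\theta\mid Y) \;=\; \log(K+1)\;-\;I(\theta;Y),
\end{equation*}
since the binary entropy of the error probability is bounded by $\log 2$ and the prior entropy equals $\log(K+1)$. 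Rearranging gives
\begin{equation*}
\P(k_{\textup{test}}\neq\theta) \;\ge\; \frac{\log(K+1)-\log 2}{\log K} \;-\; \frac{I(\theta;Y)}{\log K}.
\end{equation*}

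Second, I would bound the mutual information. Writing $\bar\P = \tfrac{1}{K+1}\sum_{k=0}^K \P_k$ for the marginal law of $Y$, the standard identity
\begin{equation*}
\frac{1}{K+1}\sum_{k=0}^{K}\KL(\P_k,Q) \;=\; I(\theta;Y) + \KL(\bar\P,Q),
\end{equation*}
valid for every probability measure $Q$ dominating each $\P_k$, shows that $I(\theta;Y) \le \tfrac{1}{K+1}\sum_{k=0}^K\KL(\P_k,Q)$ for every such $Q$. Choosing $Q=\P_0$ is legitimate by the assumed absolute continuity $\P_k\ll\P_0$, and since $\KL(\P_0,\P_0)=0$ the right-hand side collapses to $\tfrac{1}{K+1}\sum_{k=1}^K \KL(\P_k,\P_0)$, which is at most $\alpha\log K$ by hypothesis~\eqref{Ineq:Kullback}. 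Hence $I(\theta;Y)\le \alpha\log K$.

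Substituting this bound into the Fano inequality and noting that the resulting right-hand side is independent of $k_{\textup{test}}$, I can take the infimum over all tests to obtain \eqref{Main:Lower_bd}. The argument has no serious obstacle: it is entirely classical, and the only bookkeeping subtlety is that the lemma uses the normalization $\tfrac{1}{K+1}$ in the Kullback hypothesis and carries $\log(K+1)$ in the numerator while $\log K$ appears in the denominator, which is precisely the shape produced by Fano's inequality for $K+1$ hypotheses after the variational bound on $I(\theta;Y)$.
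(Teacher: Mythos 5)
Your proof is correct. The paper does not supply its own proof of this lemma but cites it from Proposition~2.3 of \citet{tsybakov2008introduction} and Lemma~4.3 of \citet{wang2025optimalminimaxratelearning}; the argument you give — Fano's inequality under a uniform prior on the $K+1$ hypotheses, then the variational identity $\frac{1}{K+1}\sum_{k}\KL(\P_k,Q) = I(\theta;Y) + \KL(\bar\P,Q)$ with $Q=\P_0$ to bound the mutual information — is exactly the standard derivation underlying those references, and the bookkeeping ($\log(K+1)$ in the numerator from the prior entropy, $\log K$ in the denominator from the $(K+1)$-ary Fano bound) checks out.
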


\begin{proof}[Proof of Theorem \ref{thm:simplified_lower_bound_phi}]
	We aim to apply Tsybakov's method to simplify probability bounds by considering a finite set of hypothesis functions. 
	Reducing the supremum over $ \mathcal{C}^\beta(L, \bar{a})$ to the finite set of hypothesis functions, and applying the Markov inequality,  we obtain
	\begin{align}		
	 & \sup_{\substack{\phi_{\star}\in\mathcal C^\beta(L,\bar{a}) \\ \|\phi_\star\|_\infty\le B_\phi }}{\mathbb{E}_{\phi_\star } \left[\Vert \widehat{\phi}_M - \phi_\star  \Vert_{L^2_{p_U}}^2\right]} \notag \\
	\geq & \max_{\phi_{k,M} \in \{\phi_{0,M}....\phi_{K,M} \}} \mathbb{E}_{\phi_{k,M}} \left[\Vert \widehat{\phi}_M - \phi_{k,M} \Vert_{L^2_{p_U}}^2\right] \notag \\
	\geq & \max_{\phi_{k,M} \in \{\phi_{0,M}....\phi_{K,M} \}}s_{N,M}^2 \mathbb{P}_{\phi_{k,M}} \left[ \Vert \widehat{\phi}_M - \phi_{k,M} \Vert_{L^2_{p_U}} > s_{N,M}\right]  \notag \\
   \geq & s_{N,M}^2 
	\frac{1}{K+1} \sum_{k=0}^{K} \mathbb{E}_{X^1,\dots, X^M}\left[\mathbb{P}_{\phi_{k,M}}\left( \Vert \widehat{\phi}_M - \phi_{k,M} \Vert_{L^2_{p_U}} > s_{N,M}\middle| X^1,\dots, X^M \right) \right],  \label{eq:bound_proof_4}
	\end{align}
where the last inequality follows since the maximal value over the functions is no less than the average and since $\mathbb{P}(A) = \mathbb{E}[1_A] = \mathbb{E}_{Z}[\mathbb{E}[1_A|Z]] = \mathbb{E}[\mathbb{P}(A|Z)]$.

Next, we transform to bounds in the average probability of testing error of the $2s_{N,M}$-separated hypothesis functions. Define $k_{\textup{test}}$ as the minimum distance test:
	\begin{equation*}
	k_{\textup{test}} = \argmin_{k=0,...,K} \Vert \widehat{\phi}_M - \phi_{k,M} \Vert_{L^2_{p_U}}.
	\end{equation*}
	Since $\phi_{k_{\textup{test}},M}$ is the closest one, we have that $\Vert \widehat{\phi}_M-\phi_{k_{\textup{test}},M} \Vert_{L^2_{p_U}} \leq \Vert \widehat{\phi}_M-\phi_{k,M} \Vert_{L^2_{p_U}}$ for all $k\neq k_{\textup{test}}$.
	Using the fact that the function $\phi_{k,M}$ are built as $2s_{N,M}$ separated functions and using the triangle inequality we have:
	\begin{equation*} 
		2s_{N,M} \leq \|{\phi_{k,M}}-\phi_{k_{\text{test}},M}\|_{L^2_{p_U}} \leq  \|\widehat{\phi}_M-\phi_{k_{\text{test}},M}\|_{L^2_{p_U}} +  \|\widehat{\phi}_M-\phi_{k,M}\|_{L^2_{p_U}}
		\leq 2 \|\widehat{\phi}_M-\phi_{k,M}\|_{L^2_{p_U}}, 
	\end{equation*}
	so 
	$s_{N,M} \leq \|\widehat{\phi}_M-\phi_{k,M}\|_{L^2_{p_U}}$ for all $ k\neq k_{test}$. 
	Hence, 
	\begin{equation} \label{eq:bound_proof_8}
		\mathbb{P}_{\phi_{k,M}} \left(\|\widehat{\phi}_M-{\phi_{k,M}} \|_{L^2_{p_U}} \geq s_{N,M} \middle|  X^1,\cdots,X^M \right) \geq \P\left(k_{\rm{test}} \neq k \middle|  X^1,\cdots,X^M\right).
	\end{equation}
	Consequently,
	\begin{align}\label{eq:bound_proof_9}
		&\frac{1}{K+1} \sum_{k=0}^K \mathbb{P}_{\phi_{k,M}}\big( \|\widehat{\phi}_M-{\phi_{k,M}} \|_{L^2_{p_U}} \geq s_{N,M} \mid X^1,\cdots,X^M \big) \nonumber \\
		\geq& \inf_{k_{\text{test}}} \frac{1}{K+1} \sum_{k=0}^K \mathbb{P}_{\phi_{k,M}}\big(k_{\text{test}} \neq k \mid X^1,\cdots,X^M \big)=\inf_{k_{\text{test}}} \frac{1}{K+1} \sum_{k=0}^K \bar{\P}_{k}\big(k_{\text{test}} \neq k \big)
	\end{align}
	where $ \bar{\P}_k(\cdot )={\P}_{\phi_{k,M}}(\cdot \mid X^{1},\ldots, X^{M})$. 
	
	The Kullback divergence estimate in \eqref{Cond:c} from Lemma \ref{lemma:construction} holds with $0<\alpha<1/8$, and by Lemma \ref{lemma:tsybakov} and the fact that $K=2^{\lceil c_{0,N} M^{\frac{1}{2\beta+1}}\rceil} $ in \eqref{eq:K} increases exponentially in $M$, we have:
	\begin{equation} \label{eq:bound_proof_10}
	\inf_{k_{\text{test}}} \frac{1}{K+1} \sum_{k=0}^K \bar{\P}_{k}\big(k_{\text{test}} \neq k \big) \geq \frac{\log(K+1)-\log(2)}{\log(K)}-\alpha \geq  \frac{1}{2}
	\end{equation}
	if $M$ is large. 
	 Note that the above lower bound of $\inf_{k_{\text{test}}} \frac{1}{K+1} \sum_{k=0}^K \bar{\P}_{k}\big(k_{\text{test}} \neq k \big)$ is independent of the dataset $\{X^m\}_{m=1}^M$. Using \eqref{eq:bound_proof_10} ,\eqref{eq:bound_proof_9} and \eqref{eq:bound_proof_4}, we obtain with $c_0=\frac 12 [C_1 C_0^{-\beta}]^2$, 
	\begin{equation} \label{eq:bound_proof_11}
		\sup_{\phi_\star  \in \mathcal{C}^s (L,\bar{a})} {\mathbb{E}_{\phi_\star } \left[\|\widehat{\phi}_M-\phi_\star  \|_{L^2_{p_U}}^2\right]} 
		\geq \frac{s_{N,M}^2}{2} 
		=c_0 N^{-\frac{2\beta}{2\beta+1}} M^{-\frac{2\beta}{2\beta+1}}
	\end{equation}
	for any estimator. Hence, the lower bound \eqref{thm:simplified_lower_bound_phi} holds. 
\end{proof}

\begin{proof}[Proof of Theorem \ref{thm:main_lower}]
First, we reduce the supremum over all $A_\star$ to a single one. Let $A^1 \in \calA_{d}(r,\bar{a})$ with $\mathrm{rank}(A^1)\geq 2$.  
Since 
\[
\mathcal{G}_{A^1}: = \Big\{g_{\phi,A^1}(x,y)=\phi(x^\top A^{1} y)\ : 
\phi\in\mathcal C^\beta(L,\bar{a}),\  \|\phi\|_\infty\le B_\phi \Big\} \subseteq  \calG_r^{\beta}(L,B_\phi,\bar a), 
\]
we have for any $\widehat g$, 
\begin{equation}\label{eq:lbm_0}
	  \sup_{g_{\star }\in\calG_r^{\beta}(L,B_\phi,\bar a)} \E \|\widehat g - g_\star\|_{L^{2}_\rho}^2 
  \geq 
  \sup_{g_\star\in \mathcal{G}_{A^1} } \E \|\widehat g - g_\star\|_{L^{2}_\rho}^2. 
\end{equation}
Thus, to prove \eqref{eq:minimax_lb}, it suffices to prove it with $g_\star\in \mathcal{G}_{A^1}$.   

Let $U^1$ be the random variable defined in  (\ref{eq:U_def}) with $A_\star=A^1$. Then, Lemma \ref{lem:bound_full_interaction} implies that 
\begin{equation*}
     \|\widehat g - g_\star\|_{L^{2}_\rho}^2 
 \ge  \| \hat{\psi}-\phi_{\star} \|_{L^2_{p_{U^1}}}^2
\end{equation*}
for any $\widehat g(x,y):= \widehat \phi(x^\top \widehat A y)$ with $\widehat \phi\in L^2_{p_{U^1}}$ and $\widehat A\in \calA_{d}(r,\bar{a})$ and any $g_\star\in \mathcal{G}_{A^1}$. Here, $\hat{\psi}$, defined in \eqref{eq:psi2hat}, varies according to $\widehat g$ since both $A_\star$ and the distribution of $X$ are fixed. Taking first the expectation over $\widehat g$, then taking the supremum over $g_\star\in \mathcal{G}_{A^1}$ 
followed by the infimum over $\widehat A$ and $\widehat \phi$, 
 we obtain 
\begin{equation}\label{eq:lbm_1}
 \inf_{\substack{\widehat A\in\calA_{d}(r,\bar{a})\\
					  \widehat\phi\in L_{p_{U^1}}^2}}
		\sup_{g_\star\in \mathcal{G}_{A^1} } \E \|\widehat g - g_\star\|^2_{L^{2}_\rho} 
	\ge  \inf_{\hat{\psi} \in L^2_{p_{U^1}}} \sup_{\substack{\phi_{\star}\in\mathcal C^\beta(L,\bar{a}) \\ \|\phi_\star\|_\infty\le B_\phi }}
	 \E \| \hat{\psi}-\phi_{\star} \|_{L^2_{p_{U^1}}}^2. 
\end{equation}
Meanwhile, Theorem~\ref{thm:simplified_lower_bound_phi} gives a lower bound 
\begin{equation} \label{eq:lbm_2}
     \liminf_{M\to \infty}   \inf_{\hat{\psi} \in L^2_{p_{U^1}}}
    \sup_{\substack{\phi_{\star}\in\mathcal C^\beta(L,\bar{a})\\ \|\phi_\star\|_\infty\le B_\phi }}
     M^{\frac{2\beta}{2\beta+1}} \E \bigl\|\hat\psi-\phi_{\star }
      \bigr\|^{2}_{L^{2}_{p_{U^1}} }
  \ge
  c_{0}N^{-\frac{2\beta}{2\beta+1}} 
\end{equation}
with $c_{0}>0$. 

Combining (\ref{eq:lbm_0})--(\ref{eq:lbm_2}), we then obtain:
\begin{align*}
\liminf_{M\to \infty}\inf_{{\widehat{g}}}
	  &\sup_{g_{\star }\in\calG_r^{\beta}(L,B_\phi,\bar a)}
			M^{\frac{2\beta}{2\beta+1}}
			\E\Bigl[\|\widehat g - g_\star\|_{L^{2}_\rho}^2	  \Bigr]	 \\
\ge & \liminf_{M\to \infty}\inf_{{\widehat{g}}}
	  \sup_{g_{\star }\in\calG_{A^1}}
			M^{\frac{2\beta}{2\beta+1}}
			\E\Bigl[\|\widehat g - g_\star\|_{L^{2}_\rho}^2	  \Bigr]	 \\
\ge & \liminf_{M\to \infty}   \inf_{\hat{\psi} \in L^2_{p_{U^1}}}
    \sup_{\substack{\phi_{\star}\in\mathcal C^\beta(L,\bar{a})\\ \|\phi_\star\|_\infty\le B_\phi }}
     M^{\frac{2\beta}{2\beta+1}} \E \bigl\|\hat\psi-\phi_{\star }
      \bigr\|^{2}_{L^{2}_{p_{U^1}} }
  \ge
  c_{0}N^{-\frac{2\beta}{2\beta+1}}, 
\end{align*}
which gives the desired result in \eqref{eq:minimax_lb}. 
\end{proof}

\section{Numerical simulations configuration} \label{app:num_sim_full}
This section provides a detailed description of the simulations presented in Section \ref{sec:num_sim}. 

\subsection{Data generation} \label{sec:data_gen}
For each sample size $M$, we run a Monte Carlo simulation over different seeds as follows. We draw token arrays
$X^{(m)}=(X^{(m)}_1,\ldots,X^{(m)}_N)\in\mathcal C_d^N$ i.i.d.\ with
$X^{(m)}_i \sim \mathrm{Unif}[0,1]^d/\sqrt d$ sampled i.i.d and construct the 
$(X^{(m)}_i)^\top A_\star X^{(m)}_j$ terms, evaluate the interaction via $\phi_\star$ (the sampling method of $\phi_\star$ and $A_\star$ is detailed below, and aggregate and add i.i.d. noise $\eta^{(m)}_i\sim\mathcal N(0,\sigma^2)$ as described in \eqref{eq:scalar_model} to generate $Y_i^{(m)}$.

For each simulation, we sample the ground truth interaction $g_\star(x,y) = \phi_\star(x^\top A_\star y)$ by drawing random $\phi_\star$ and choosing $A_\star$. 
We represent $\phi_{\star}$ as a B-spline of degree $P_\star$ defined on an open uniform knots with $K$ basis functions on $[-1,1]$.
\[
\phi_\star(u) =\sum_{k=1}^{K_\star} \theta_\star^k B_k(u).
\]
For each seed, we draw $\theta_\star\sim\mathcal N(0,I_{K_\star})$ and then normalize it for $\|\theta_\star\| = \sqrt{K_\star}$.  

\subsection{Estimator}
If $A_\star$ was known, the
estimator can be computed by setting $\hat{A} = A_{\star}$ and setting $\hat{\phi}(u) =\sum_{k=1}^{K} \hat{\theta}_k B_k(u)$ with degree $P_{\text{est}}$ and $\hat{\theta}$ chosen according to the ridge regression formula: 
\begin{equation} \label{eq:spline_ridge_regression}
	 \hat{\theta}=\big(U^\top U+\lambda_\theta I\big)^{-1}U^\top y,
\end{equation}
where $U = \big (U_{(m,i),k} \big)\in \R^{MN\times K}$ with 
\begin{equation} \label{eq:U_mik}
U_{(m,i),k}: =\frac{1}{N-1}\sum_{j\ne i} B_k\big((X_i^{(m)})^\top AX_j^{(m)}\big)
\end{equation}
and $y=\big(Y_i^{(m)}\big)\in \R^{MN\times 1}$.

 However, since $A_\star$ is unknown, the joint estimation of $(A,\phi)$ is non-convex due to the composition $\phi(x^\top A y)$. To mitigate local minima, we use a hot start and an alternating scheme. We perform the hot start by setting $A^{(0)} = A_\star + \Delta_A$ with $\Delta_A$ being a perturbation specified in Table \ref{tab:sim_params} and setting the initial $\theta^{(0)}$ as the matching ridge solution \ref{eq:spline_ridge_regression}. 
  In the PyTorch implementation, the scheme includes a description of the function $\hat{\phi}$ as a neural network. This is because representing it as B-splines directly would require differentiating through the B-spline basis, which is cumbersome for automatic differentiation. To address this, we introduce a neural-network surrogate $\Phi_{\text{net}}$ that approximates the spline and can be used as a differentiable link in the $A$-step.
\paragraph{Alternating Optimization for $\hat{\phi},\hat{A}$}
\begin{enumerate}
    \item \textbf{Hot start:} set $A^{(0)}=A_\star+\Delta_A$ and compute
      $\theta^{(0)}= \big(U^\top U+\lambda_\theta I\big)^{-1}U^\top y$ with $U$ computed according to $A^{(0)}$ in \eqref{eq:U_mik}
      \item \textbf{For $t=1,\dots,T$:}
          \begin{enumerate}[itemsep=0.1em]
          \item \emph{Approximate the current spline using a multilayer perceptron (MLP).}
                 Fit an MLP $\Phi_{\text{net}}^{(t-1)}$ on a grid $\{u_\ell\}$ to minimize $\sum_\ell|\Phi_{\text{net}}^{(t-1)}(u_\ell)-\tilde\phi^{(t-1)}(u_\ell)|^2$
               
          \item \emph{$A$-step through optimization.} Update $A$ by minimizing the empirical loss with
                $\hat\phi = \Phi_{\text{net}}^{(t-1)}$ held fixed using the Adam optimizer
        		\begin{equation} \label{eq:sim_loss}
        		\min_{A}\ \frac{1}{MN}\sum_{m=1}^M\sum_{i=1}^N\Big(\sum_{j\ne i}\hat g_{A,\Phi_{\text{net}}^{(t-1)}}(X^{(m)}_i,X^{(m)}_j)-Y^{(m)}_i\Big)^2
        		+\frac{\lambda_A}{2}\|A\|_F^2.
        		\end{equation}
                \item \emph{$\theta$-step through closed form.} With $A$ fixed at $A^{(t)}$, compute $\theta^{(t)}$ by ridge regression:
                stack $y\in\R^{MN}$ from $Y_i^{(m)}$, and build $U\in\R^{MN\times K}$ with
                \[
                  U_{(m,i),k}\;=\;\frac{1}{N-1}\sum_{j\ne i} B_k\!\big((X_i^{(m)})^\top A^{(t)}X_j^{(m)}\big)
                \]
            and compute 
                \[
                  \theta^{(t)}=\big(U^\top U+\lambda_\theta I\big)^{-1}U^\top y.
                \]
          \end{enumerate}
\end{enumerate}

\paragraph{Choice of $K_{\text{est}}$ and $\lambda_\theta$.}
We set the number of spline coefficients by the bias variance trade-off for a $\beta$-Hölder smoothness as done in \eqref{eq:K_M_optimal} 
\[
K_{\text{est}}=\mathrm{round} \Big(K_{\mathrm{scale}}(M/\log M)^{1/(2\beta+1)}\Big)
\]
where $K_{\mathrm{scale}}$ is a chosen constant. For the ridge regularization constant $\lambda_\theta$ we follow the standard scaling for least squares models with $MN$ responses and $K$ coefficients, the variance of $\hat\theta$ should scale like $K/(MN)$, so we take
\[
\lambda_\theta=\lambda_{\mathrm{scale}}\,\frac{K_{\text{est}}}{M(N-1)},
\]

\subsection{Error estimate}

We measure accuracy via the estimator test MSE, sampling never seen inputs $X^{(m)}\sim\mathrm{Unif}[0,1]^d/\sqrt d$ and evaluating:
\[
\mathrm{MSE}_g= \frac{1}{N_{\mathrm{test}}}\sum_{m=1}^{N_{\mathrm{test}}}\frac{1}{N(N-1)} \sum_{i=1}^N\big|\sum_{j\neq i} \hat g_{\hat A,\hat\phi}(X_i^{(m)},X_j^{(m)})-g_\star(X_i^{(m)},X_j^{(m)})\big|^2. 
\]
\subsection{Simulation parameters} \label{sec:sim_params}
The following table details the parameters used for the simulations described in Section \ref{sec:num_sim}.

\begin{table}[h!] 
	\small
	\caption{Chosen parameters for the simulation}
    \label{tab:sim_params}
    \begin{center}
	\begin{tabular}{l l}
	\multicolumn{1}{c}{\bf Parameter}  &\multicolumn{1}{c}{\bf Value} \\
    \\ \hline \\
    Seeds & 300 \\
    $A_\star$ & Diagonal matrix with i.i.d. entries $A_{11}=1,\; \forall i>1\;A_{ii} \sim \mathrm{Unif}[-1,1]$ \\
	Sample sizes $M$ & [20000, 27355, 37416, 51177, 70000] \\ 
	$N$ & 3 \\
	Gaussian noise std $\sigma_\eta$ & $0.07$ (Gaussian) \\
	Estimator degree & $P_{\text{est}}=P_\star$ \\
    $K_{\star}$ & 16 \\
    $K_{\mathrm{scale}}$ & [(a) and (b) for $P_\star=3$]: 16 \\
    &  [(b) for $P_\star=8$]: 30 \\
	Basis size $K_{\text{est}}$ 
	& [(a) and (b) for $P_\star=3$]: $ \{73, 78, 82, 87, 92\}$ (matching the $M$ grid) \\  
	& [(b) for $P_\star=8$]: $ \{50, 51, 52, 53, 54\}$ (matching the $M$ grid) \\ 
    $\lambda_A$ & $10^{-5}$ \\
    $\lambda_{\mathrm{scale}}$ & 2 \\
	$\lambda_\theta$
	& [(a) and (b) for $P_\star=3$] $ 10^{-3} \times\{6.85  ,5.30  ,4.12  ,3.19  ,2.46 \}$ \\ 
	& [(b) for $P_\star=8$]$\{2.50 ,1.86 ,1.39 ,1.04 ,0.77\}$ \\ 
	& (matching the $M$ grid) \\
	$\Delta_A$ & Entry wise Gaussian noise with an std of $5/d \times 10^{-7}$ \\ 
	$T$ & $4$ \\
	$A$-step optimizer & Adam, lr $=10^{-8}$, $20$ epochs. \\
    $\Phi_{\text{net}}^{(t)}$ architecture & 1-hidden layer of width 32 (GELU activation) \\
	  $\Phi_{\text{net}}^{(t)}$ optimization &  $1000$ epochs (Adam) lr = 0.01 \\
	Test set & 2000 samples \\
	\end{tabular}
    \end{center}
\end{table}

\end{document}